\newtheorem{definition}{Definition}
\newtheorem{theorem}{Theorem}
\newtheorem{corollary}{Corollary}
\newtheorem{lemma}{Lemma}
\newtheorem{proposition}{Proposition}
\newtheorem{example}{Example}
\newenvironment{proof}[1][Proof]{\begin{trivlist}
\item[\hskip \labelsep {\bfseries #1}]}{\hfill$\Box$\end{trivlist}}
\newenvironment{proof2}[1][Proof]{\begin{trivlist}
\item[\hskip \labelsep {\bfseries #1}]}{\end{trivlist}}
\newenvironment{proof3}[1][]{\begin{trivlist}
\item[\hskip \labelsep {\bfseries #1}]}{\hfill$\Box$\end{trivlist}}
\begin{document} 

\lhead{Aur\'elien Bellet, Amaury Habrard} 
\rhead{Robustness and Generalization for Metric Learning}
\cfoot{\thepage} 

\renewcommand{\headrulewidth}{0.4pt}  

\title{Robustness and Generalization for Metric Learning}

\author{
Aur\'elien Bellet\thanks{Department of Computer Science, University of Southern California. Email: \texttt{bellet@usc.edu}.}~\thanks{Most of the work in this paper was carried out while the author was affiliated with Laboratoire Hubert Curien UMR CNRS 5516, Universit\'e Jean Monnet, 42000 Saint-Etienne, France.}
\and Amaury Habrard\thanks{Laboratoire Hubert Curien UMR CNRS 5516, Universit\'e Jean Monnet, 42000 Saint-Etienne, France. Email: \texttt{amaury.habrard@univ-st-etienne.fr}}
}

\date{}

\maketitle

\begin{abstract}
Metric learning has attracted a lot of interest over the last decade, but the generalization ability of such methods has not been thoroughly studied. In this paper, we introduce an adaptation of the notion of algorithmic robustness (previously introduced by Xu and Mannor) that can be used to derive generalization bounds for metric learning. We further show that a weak notion of robustness is in fact a 
necessary and sufficient condition for a metric learning algorithm to generalize. To illustrate the applicability of the proposed framework, we derive generalization results for a large family of existing metric learning algorithms, including some sparse formulations that are not covered by previous results.
\textbf{Keywords:} Metric learning, Algorithmic robustness, Generalization bounds.
\end{abstract}

\section{Introduction}

Metric learning consists in automatically adjusting a distance or similarity function using training examples. The resulting metric is tailored to the problem of interest and can lead to dramatic improvement in classification, clustering or ranking performance. For this reason, metric learning has attracted a lot of interest for the past decade (see \cite{Bellet2013,Kulis2012} for recent surveys). 
Existing approaches rely on the principle that pairs of examples with the same (resp. different) labels should be close to each other (resp. far away) under a good metric. Learning thus generally consists in finding the best parameters of the metric function given a set of labeled pairs.\footnote{Some methods use triplets $(x,y,z)$ such that $x$ should be closer to $y$ than to $z$, where $x$ and $y$ share the same label, but not $z$.}
Many methods focus on learning a Mahalanobis distance, which is parameterized by a positive semi-definite (PSD) matrix and can be seen as finding a linear projection of the data to a space where the Euclidean distance performs well on the training pairs (see for instance \cite{Xing2002,Schultz2003,Davis2007,Jain2008,Weinberger2009,Ying2009,McFee2010}). More flexible metrics have also been considered, such as similarity functions without PSD constraint   \cite{Chechik2009,Qamar2010,Bellet2012a}. The resulting distance or similarity is used to improve the performance of a metric-based algorithm such as $k$-nearest neighbors \cite{Davis2007,Weinberger2009}, linear separators \cite{Bellet2012a,Guo2014}, $K$-Means clustering \cite{Xing2002} or ranking \cite{McFee2010}.


Despite the practical success of metric learning, little work has gone into a formal analysis of the generalization ability of the resulting metrics on unseen data. The main reason for this lack of results is that metric learning violates the common assumption of independent and identically distributed (IID) data. Indeed, the training pairs are generally given by an expert and/or extracted from a
sample of individual instances, by considering all possible pairs or only a subset based for instance on the nearest
or farthest neighbors of each example, some criterion of diversity \cite{Kar2011} or a random sample. 
Online learning algorithms \cite{Shalev-Shwartz2004,Jain2008,Chechik2009} can still offer some guarantees in this setting, but only in the form of regret bounds assessing the deviation between the cumulative loss suffered by the online algorithm and the loss induced by the best hypothesis that can be chosen in hindsight. These may be converted into proper generalization bounds under restrictive assumptions \cite{Wang2012c}.
Apart from these results on online metric learning, very few papers have looked at the generalization ability of batch methods. The approach  of Bian and Tao \cite{Bian2011,Bian2012} uses a statistical analysis to give 
generalization guarantees for loss minimization approaches, but their results rely on restrictive assumptions on the distribution of the examples and do not take into account any regularization on the metric. Jin et al. \cite{Jin2009} adapted the framework of 
uniform stability \cite{Bousquet2002} to regularized metric
learning. However, their approach is based on a Frobenius norm regularizer and cannot be applied to other type of regularization, in particular sparsity-inducing norms \cite{Xu2012} that are used in many recent metric learning approaches \cite{Rosales2006,Ying2009,Qi2009,McFee2010}. Independently and in parallel to our work, Cao et al. \cite{Cao2012a} proposed a framework based on Rademacher analysis, which is general but rather complex and limited to pair constraints.

In this paper, we propose to study the generalization ability of metric learning algorithms according to a notion of {\it algorithmic robustness}. 
This framework, introduced by Xu et al. \cite{Xu2010,Xu2012a}, allows one to derive 
generalization bounds when the variation in the loss associated with two ``close'' training and testing examples is bounded. 
The notion of closeness relies on a partition of the input space into
different regions such that two examples in the same region are considered close. 
Robustness has been successfully used to derive generalization bounds in the classic supervised learning setting, with results for SVM, LASSO, etc. 
We propose here to adapt algorithmic robustness to metric learning. 
We show that, in this context, the problem of non-IIDness of the training pairs/triplets can be worked around by simply assuming that they are built from an IID sample of labeled examples. 
Moreover, following \cite{Xu2012a}, we provide a notion of weak robustness that is necessary and sufficient for metric learning algorithms to generalize well, confirming that robustness is a fundamental property. 
We illustrate the applicability of the proposed framework by deriving generalization bounds, using very few approach-specific arguments, for a family of problems that is larger than what is considered in previous work \cite{Jin2009,Bian2011,Bian2012,Cao2012a}. In particular, results apply to a vast choice of regularizers,  without any assumption on the distribution of the examples and using a simple proof technique.

The rest of the paper is organized as follows. We introduce some preliminaries and notations in Section~\ref{prelim}. Our notion of algorithmic robustness for metric learning is presented in Section~\ref{robustsec}. The necessity and sufficiency  of weak robustness is shown in Section~\ref{nessec}. Section~\ref{exsec} illustrates the wide applicability of our framework by deriving bounds for existing metric learning formulations. Section~\ref{disc} discusses the merits and limitations of the proposed analysis compared to related work, and we conclude in Section~\ref{conclu}.

\section{Preliminaries}
\label{prelim}

\subsection{Notations}

Let $X$ be the instance space, $Y$
be a finite label set and let $\mathcal{Z}=X\times Y$. In the following, $z=(x,y)\in\mathcal{Z}$ means $x\in X$ and $y\in Y$. Let $\mu$ be an unknown 
probability distribution over $\mathcal{Z}$.
 We assume  
that $X$ is a compact convex metric space w.r.t. a norm $\|\cdot\|$ such that $X\subset\mathbb{R}^d$, thus there exists a constant $R$ such that $\forall x\in X$, $\|x\|\leq R$.  
A similarity or distance function is a pairwise function $f:X\times
X\rightarrow \mathbb{R}$.  
In the following, we use the generic term {\it metric} to refer to either a similarity or a distance function.    
We denote by $\mathbf{s}$  a labeled training
sample consisting of $n$ training instances $(s_1,\ldots,s_n)$ drawn
IID from $\mu$.
The sample of all possible pairs built from $\mathbf{s}$ is denoted by $p_{\mathbf{s}}$ such that  $p_{\mathbf{s}}=\{(s_1,s_1),\ldots,(s_1,s_n),\ldots,(s_n,s_n)\}$.
A metric learning algorithm $\mathcal{A}$ takes as input a finite set of pairs from
$(\mathcal{Z}\times\mathcal{Z})^n$ and outputs a metric.
We denote by $\mathcal{A}_{p_{\mathbf{s}}}$ the metric learned by an algorithm $\mathcal{A}$ from a sample $p_{\mathbf{s}}$ of pairs. 
For any pair of labeled examples $(z,z')$ and any metric $f$, we associate a loss function $l(f,z,z')$ which depends on the examples and their labels. This loss is assumed to be nonnegative and uniformly bounded by a constant $B$.  
We define the generalization loss (or true loss) over $\mu$ as
$$\mathcal{L}(f)=\mathbb{E}_{z,z'\sim \mu}l(f,z,z'),$$ 
and the empirical loss over the sample $p_{\mathbf{s}}$ as
$$l_{emp}(f)=\frac{1}{n^2}\sum_{i=1}^n\sum_{j=1}^nl(f,s_i,s_j)=\frac{1}{n^2}\sum_{(s_i,s_j)\in p_{\mathbf{s}}}l(f,s_i,s_j).$$
We are interested in bounding the deviation between $l_{emp}(f)$ and $\mathcal{L}(f)$.

\subsection{Algorithmic Robustness in Classic Supervised Learning}

The notion of algorithmic robustness, introduced by Xu and Mannor \cite{Xu2010,Xu2012a} in the context of classic supervised learning, is based on the deviation between the loss associated with two training and testing instances that are ``close''.  Formally, an algorithm is said $(K,\epsilon(\mathbf{s}))$-robust if there exists a partition of the  space $\mathcal{Z}=X\times Y$ into $K$ disjoint subsets such that for every training and testing instances belonging to the same region of the partition, the variation in their associated loss is   bounded by a term $\epsilon(\mathbf{s})$. From this definition, the authors have proved a bound for the difference between the empirical loss and the true loss that has the form
\begin{equation}
\label{robbound}
\epsilon(\mathbf{s})+B\sqrt{\frac{2K \ln 2 + 2\ln 1/\delta}{n}},
\end{equation}
with probability $1-\delta$.  
This bound depends on $K$ and $\epsilon(\mathbf{s})$. The latter should tend to zero as $K$ increases to ensure that \eqref{robbound} also goes to zero when $n\rightarrow \infty$.\footnote{This point will be made clear by the examples provided in Section~\ref{exsec}.}
When considering metric spaces, the partition of $\mathcal{Z}$  can be obtained by  the notion of covering number \cite{Kolmogorov1961}.
\begin{definition}
For a metric space $(X,\rho)$, and $T\subset X$, we say that
$\hat{T}\subset T$ is a $\gamma$-cover of $T$, if $\forall t\in T$,
$\exists \hat{t}\in \hat{T}$ such that $\rho(t,t')\leq \gamma$. The
$\gamma$-covering number of $T$ is
$$
{\mathcal N}(\gamma,T,\rho)=\min\{|\hat{T}|: \hat{T} \mbox{\ is a\ }
\gamma-\mbox{\ cover of\ }T\}.
$$
\end{definition}
When $X$ is a compact convex space, for any $\gamma>0$, the quantity ${\mathcal N}(\gamma,X,\rho)$ is finite leading to a finite cover. 
If we consider  the space $\mathcal{Z}$, note that the label set can be partitioned into $|Y|$ sets. Thus, $\mathcal{Z}$ can be partitioned into $|Y|\mathcal{N}(\gamma,X,\rho)$ subsets such that if two instances $z_1=(x_1,y_1)$, $z_2=(x_2,y_2)$ belong to the same subset, then  $y_1=y_1$ and $\rho(x_1,x_2)\leq \gamma$.

\section{Robustness and Generalization for Metric Learning }
\label{robustsec}

\begin{figure}[t]
\begin{center}
\includegraphics[width=0.7\columnwidth]{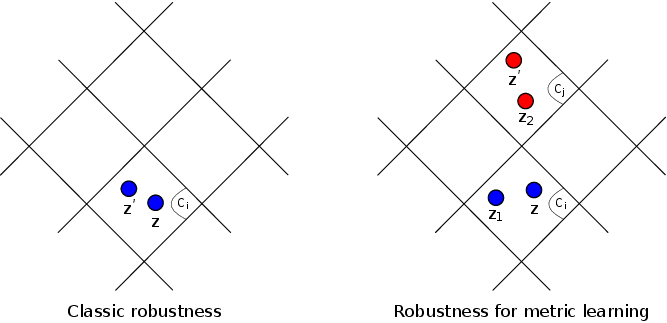}
\caption{Illustration of the robustness property in the classic and metric learning settings. In this example, we use a cover based on the $L_1$ norm. In the classic definition, if any example $z'$ falls in the same region $C_i$ as a training example $z$, then the deviation between their loss must be bounded. In the metric learning definition proposed in this work, for any pair $(z,z')$ and a training pair $(z_1,z_2)$, if $z,z_1$ belong to some region $C_i$ and $z',z_2$ to some region $C_j$, then the deviation between the loss of these two pairs must be bounded.}
\label{fig:robustness}
\end{center}
\end{figure}

We present here our adaptation of robustness to metric learning. 
 The idea is to use the partition of $\mathcal{Z}$ at the pair level: if a new test pair of examples is close to a training pair, then the loss value for each pair must be close. Two pairs are close when each instance of the first pair fall into the same subset of the partition of $\mathcal{Z}$ as the corresponding instance of the other pair, as shown in Figure~\ref{fig:robustness}.
A metric learning algorithm with this property is said robust. This notion is formalized as follows.

\begin{definition}\label{def:robu}
An algorithm $\mathcal{A}$ is $(K,\epsilon(\cdot))$ robust for
$K\in\mathbb{N}$ and $\epsilon(\cdot): (\mathcal{Z}\times\mathcal{Z})^n \rightarrow
\mathbb{R}$ if $\mathcal{Z}$ can be partitioned into $K$ disjoints sets, denoted
by $\{C_i\}_{i=1}^K$, such that  for all
sample $\mathbf{s}\in\mathcal{Z}^n$ and the pair set $p(\mathbf{s})$ associated to this sample, the following holds: \\
$\forall (s_1,s_2) \in p(\mathbf{s}), \forall z_1,z_2 \in
\mathcal{Z}, \forall i,j=1,\ldots,K:$  if 
$s_1,z_1\in C_i$  and  $s_2,z_2\in C_j$ then
\begin{equation}\label{eq:robustness}
|l(\mathcal{A}_{p_{\mathbf{s}}},s_1,s_2)-l(\mathcal{A}_{p_{\mathbf{s}}},z_1,z_2)|\leq \epsilon(p_{\mathbf{s}}).
\end{equation}
\end{definition}

$K$ and $\epsilon(\cdot)$ quantify the robustness of the algorithm and depend on the training sample. The property of robustness is required for every training pair of the sample; we will later see that this property can be relaxed. 

Note that this definition of robustness can be easily extended to triplet based metric learning algorithms. Instead of considering all the pairs $p_{\mathbf{s}}$ from an IID sample $\mathbf{s}$, we take the admissible triplet set $trip_{\mathbf{s}}$ of $\mathbf{s}$ such that $(s_1,s_2,s_3)\in trip_{\mathbf{s}}$ means $s_1$ and $s_2$ share the same label while $s_1$ and $s_3$ have different ones, with the interpretation that $s_1$ must be more similar to $s_2$ than to $s_3$. The robustness property can then be expressed by: 
$\forall (s_1,s_2,s_3) \in trip_{\mathbf{s}}, \forall z_1,z_2,z_3 \in
\mathcal{Z}, \forall i,j,k=1,\ldots,K:$  if 
$s_1,z_1\in C_i$,  $s_2,z_2\in C_j$ and $s_3,z_3\in C_k$ then 
\begin{equation}\label{eq:robu_trip} |l(\mathcal{A}_{trip_{\mathbf{s}}},s_1,s_2,s_3)-l(\mathcal{A}_{trip_{\mathbf{s}}},z_1,z_2,z_3)|\leq \epsilon(trip_{\mathbf{s}}).
\end{equation}

\subsection{Generalization of robust algorithms}
We now give a PAC generalization bound for metric learning algorithms fulfilling the property of robustness (Definition~\ref{def:robu}).
We first  begin by presenting a concentration inequality that will help us to derive the bound.
\begin{proposition}[\cite{Vaart2000}]\label{prop:BHC}
Let $(|N_1|,\ldots,|N_K|)$ an IID multinomial random variable with
parameters $n$ and $(\mu(C_1),\ldots,\mu(C_K))$. 
By the Bretagnolle-Huber-Carol inequality we have:
$
Pr\left\{\sum_{i=1}^K \left|\frac{|N_i|}{n}-\mu(C_i)\right| \geq
  \lambda \right\}\leq 2^K \exp\left(\frac{-n\lambda^2}{2}\right)
$, 
hence with probability at least $1-\delta$,
\begin{equation}
\sum_{i=1}^K \left|\frac{N_i}{n}-\mu(C_i)\right|\leq \sqrt{\frac{2K\ln
  2 + 2 \ln(1/\delta)}{n}}.
\end{equation}
\end{proposition}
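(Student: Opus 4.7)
The proposition has two parts. The first is exactly the statement of the Breteganolle--Huber--Carol concentration inequality for a multinomial distribution, which I would simply cite from van der Vaart's empirical process text. The second is a standard high-probability reformulation obtained by inverting the tail bound in $\lambda$.

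The plan is to proceed in two short steps. First, I would invoke the Breteganolle--Huber--Carol tail bound directly, giving
\[
Pr\left\{\sum_{i=1}^K \left|\frac{|N_i|}{n} - \mu(C_i)\right| \geq \lambda\right\} \leq 2^K \exp\left(-\frac{n\lambda^2}{2}\right)
\]
for every $\lambda > 0$. Second, to convert this into a high-probability statement, I would set the right-hand side equal to $\delta$ and solve for $\lambda$. Taking logarithms yields $K\ln 2 - n\lambda^2/2 = \ln \delta$, hence $\lambda = \sqrt{(2K\ln 2 + 2\ln(1/\delta))/n}$. Passing to the complementary event then gives that with probability at least $1-\delta$ the quantity $\sum_{i=1}^K |N_i/n - \mu(C_i)|$ is bounded above by this value of $\lambda$, which is precisely the claimed inequality.

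There is essentially no obstacle in the second step: it is purely algebraic. The genuine content lies in the Breteganolle--Huber--Carol inequality itself, whose proof proceeds by rewriting $\sum_{i=1}^K |N_i/n - \mu(C_i)|$ as $2\sup_{A \subseteq \{1,\ldots,K\}} |\sum_{i \in A}(N_i/n - \mu(C_i))|$, applying Hoeffding's inequality to each of the $2^K$ fixed index subsets $A$, and concluding by a union bound over those $2^K$ events (which is where the factor $2^K$ in the tail bound comes from). Since this is a classical result and is already cited in the excerpt, I would not reprove it here; the only concrete calculation needed for the proposition is the inversion carried out in the second step.
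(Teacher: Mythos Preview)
Your proposal is correct and matches the paper's treatment: the proposition is stated there as a cited result from \cite{VDV-empricalprocess} with no proof given, so the only content to supply is exactly the algebraic inversion you describe, setting $2^K\exp(-n\lambda^2/2)=\delta$ and solving for $\lambda$. Your additional sketch of how the Breteganolle--Huber--Carol inequality itself is proved (via the supremum-over-subsets rewriting, Hoeffding, and a union bound over $2^K$ events) goes beyond what the paper provides and is a helpful gloss, but is not needed for the proposition as stated.
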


We now give our first result on the generalization of metric learning algorithms.
\begin{theorem}\label{th:robu}
If a learning algorithm $\mathcal{A}$ is $(K,\epsilon(\cdot))$-robust
and the training sample is made of the pairs $p_{\mathbf{s}}$ obtained from a sample $\mathbf{s}$ generated by $n$ IID draws from $\mu$, then
for any $\delta>0$, with probability at least $1-\delta$ we have:
$$
|\mathcal{L}(\mathcal{A}_{p_{\mathbf{s}}})-l_{emp}(\mathcal{A}_{p_{\mathbf{s}}})|\leq \epsilon(p_{\mathbf{s}})+2B\sqrt{\frac{2K \ln 2 + 2\ln 1/\delta}{n}}.
$$
\end{theorem}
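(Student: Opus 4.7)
The plan is to adapt Xu and Mannor's classical robustness argument to the pairwise setting, using the partition $\{C_i\}_{i=1}^K$ of $\mathcal{Z}$ guaranteed by $(K,\epsilon(\cdot))$-robustness and handling the product structure of pairs carefully. Let $N_i = |\{t : s_t \in C_i\}|$ denote the number of training points landing in $C_i$, so that $\sum_i N_i = n$ and $(N_1,\ldots,N_K)$ is the IID multinomial random vector appearing in Proposition~\ref{prop:BHC}. I would decompose
$$\mathcal{L}(\mathcal{A}_{p_{\mathbf{s}}}) = \sum_{i,j=1}^K \mu(C_i)\mu(C_j)\,\mathbb{E}[l(\mathcal{A}_{p_{\mathbf{s}}},z,z') \mid z\in C_i, z'\in C_j],$$
and write $l_{emp}(\mathcal{A}_{p_{\mathbf{s}}}) = \sum_{i,j} \frac{N_i N_j}{n^2}\,\overline{l}_{ij}$, where $\overline{l}_{ij}$ is the empirical average of the loss over the training pairs falling in $C_i\times C_j$ (defined arbitrarily when $N_iN_j=0$, since those cells contribute zero on both sides). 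The key bridging quantity is $\sum_{i,j} \frac{N_i N_j}{n^2}\,\mathbb{E}[l \mid z\in C_i, z'\in C_j]$, which I would insert via the triangle inequality.

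The term $\big|\sum_{i,j}\frac{N_iN_j}{n^2}(\overline{l}_{ij} - \mathbb{E}[l\mid z\in C_i,z'\in C_j])\big|$ is controlled directly by robustness: for every training pair in $C_i\times C_j$ and every test pair in the same product cell, Definition~\ref{def:robu} gives loss deviation at most $\epsilon(p_{\mathbf{s}})$, so averaging over the conditional test distribution yields $|\overline{l}_{ij} - \mathbb{E}[l\mid z\in C_i,z'\in C_j]| \leq \epsilon(p_{\mathbf{s}})$; weighting by $N_iN_j/n^2$ (which sum to $1$) gives a bound of $\epsilon(p_{\mathbf{s}})$. The complementary term $\big|\sum_{i,j}(\mu(C_i)\mu(C_j) - N_iN_j/n^2)\,\mathbb{E}[l\mid z\in C_i,z'\in C_j]\big|$ uses $l\leq B$ together with the telescoping identity
$$\mu(C_i)\mu(C_j) - \tfrac{N_iN_j}{n^2} = \mu(C_i)\bigl(\mu(C_j)-\tfrac{N_j}{n}\bigr) + \tfrac{N_j}{n}\bigl(\mu(C_i)-\tfrac{N_i}{n}\bigr),$$
which, after summing over $i,j$ and using $\sum_i \mu(C_i) = \sum_j N_j/n = 1$, yields $\sum_{i,j}|\mu(C_i)\mu(C_j) - N_iN_j/n^2| \leq 2\sum_i |\mu(C_i) - N_i/n|$. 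Applying Proposition~\ref{prop:BHC} to this last sum then produces the $2B\sqrt{(2K\ln 2 + 2\ln(1/\delta))/n}$ term, and combining with the $\epsilon(p_{\mathbf{s}})$ bound finishes the proof.

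The main conceptual obstacle is the product/dependence structure of $p_{\mathbf{s}}$: the empirical sum involves $n^2$ strongly dependent pairs rather than $n$ IID samples, so a direct Hoeffding-type argument on pairs would fail. The resolution is that, conditionally on the multinomial counts $N_i$, robustness converts the sum over pairs into a sum over product cells, and the only remaining randomness is the deviation of the $N_i/n$ from the $\mu(C_i)$, which is genuinely IID and handled by Proposition~\ref{prop:BHC}. This is also why a factor $2B$ appears rather than $B$ as in the original Xu--Mannor bound: the product of two marginals must be linearized into two separate deviation terms via the telescoping identity above.
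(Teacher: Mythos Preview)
Your proposal is correct and follows essentially the same approach as the paper's proof: decompose $\mathcal{L}$ over the product partition $\{C_i\times C_j\}$, insert the intermediate quantity with empirical weights $N_iN_j/n^2$, bound one piece by $\epsilon(p_{\mathbf{s}})$ via robustness, and bound the other by $2B\sum_i|\mu(C_i)-N_i/n|$ via the telescoping identity (the paper writes this telescoping out as two successive triangle-inequality steps (a) and (b) rather than as a single algebraic identity). Proposition~\ref{prop:BHC} then finishes the argument exactly as you describe.
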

\begin{proof}
Let $N_i$ be the set of index of points of  $\mathbf{s}$ that fall into the $C_i$. $(|N_1|,\ldots,|N_K|)$ is a IID random variable with parameters $n$ and $(\mu(C_1),\ldots,\mu(C_K))$. 
We have:
\begin{eqnarray*}
\lefteqn{|\mathcal{L}(\mathcal{A}_{p_{\mathbf{s}}})-l_{emp}(\mathcal{A}_{p_{\mathbf{s}}})|}\\
&=&\left|\sum_{i,j=1}^K\mathbb{E}_{z_1,z_2\sim
    \mu}\left(l(\mathcal{A}_{p_{\mathbf{s}}},z_1,z_2)|z_1\in C_i,z_2\in  C_j\right)\mu(C_i)\mu(C_j)-\frac{1}{n^2}\sum_{i,j=1}^nl(\mathcal{A}_{p_{\mathbf{s}}},s_i,s_j)\right|\\
&\stackrel{(a)}{\leq}&\left|\sum_{i,j=1}^K\mathbb{E}_{z_1,z_2\sim
    \mu}\left(l(\mathcal{A}_{p_{\mathbf{s}}},z_1,z_2)|z_1\in C_i,z_2\in
  C_j\right)\mu(C_i)\mu(C_j)-\right.\\
&&\hspace{2cm}\left.\sum_{i,j=1}^K\mathbb{E}_{z_1,z_2\sim
    \mu}\left(l(\mathcal{A}_{p_{\mathbf{s}}},z_1,z_2)|z_1\in C_i,z_2\in
  C_j\right)\mu(C_i)\frac{|N_j|}{n}\right|+\\
&&\left|\sum_{i,j=1}^K\mathbb{E}_{z_1,z_2\sim
    \mu}\left(l(\mathcal{A}_{p_{\mathbf{s}}},z_1,z_2)|z_1\in C_i,z_2\in  C_j\right)\mu(C_i)\frac{|N_j|}{n}-\frac{1}{n^2}\sum_{i,j=1}^nl(\mathcal{A}_{p_{\mathbf{s}}},s_i,s_j)\right|
    \end{eqnarray*}
\begin{eqnarray*}
&\stackrel{(b)}{\leq}&\left|\sum_{i,j=1}^K\mathbb{E}_{z_1,z_2\sim
    \mu}\left(l(\mathcal{A}_{p_{\mathbf{s}}},z_1,z_2)|z_1\in C_i,z_2\in
  C_j\right)\mu(C_i)(\mu(C_j)-\frac{|N_j|}{n})\right|+\\
&&\left|\sum_{i,j=1}^K\mathbb{E}_{z_1,z_2\sim
    \mu}\left(l(\mathcal{A}_{p_{\mathbf{s}}},z_1,z_2)|z_1\in C_i,z_2\in
  C_j\right)\mu(C_i)\frac{|N_j|}{n}-\right.\\
&&\hspace{2cm}\left.\sum_{i,j=1}^K\mathbb{E}_{z_1,z_2\sim
    \mu}\left(l(\mathcal{A}_{p_{\mathbf{s}}},z_1,z_2)|z_1\in C_i,z_2\in
  C_j\right)\frac{|N_i||N_j|}{n}\right|+\\
&&\left|\sum_{i,j=1}^K\mathbb{E}_{z_1,z_2\sim
    \mu}\left(l(\mathcal{A}_{p_{\mathbf{s}}},z_1,z_2)|z_1\in C_i,z_2\in
  C_j\right)\frac{|N_i||N_j|}{n}
-\frac{1}{n^2}\sum_{i,j=1}^nl(\mathcal{A}_{p_{\mathbf{s}}},s_i,s_j)\right|\\
&\stackrel{(c)}{\leq}&B\left(\left|\sum_{j=1}^K\mu(C_j)-\frac{|N_j|}{n}\right|
+\left|\sum_{i=1}^K\mu(C_i)-\frac{|N_i|}{n}\right|\right)+\\
&&\left|\frac{1}{n^2}\sum_{i,j=1}^K\sum_{s_o\in N_i}\sum_{s_l\in
  N_j}\max_{z\in C_i}\max_{z'\in
  C_j}|l(\mathcal{A}_{p_{\mathbf{s}}},z,z')-l(\mathcal{A}_{p_{\mathbf{s}}},s_o,s_l)|\right|\\
&\stackrel{(d)}{\leq}&\epsilon(p_{\mathbf{s}})+2B\sum_{i=1}^K\left|\frac{|N_i|}{n}-\mu(C_i)\right|
\stackrel{(e)}{\leq}\epsilon(p_{\mathbf{s}})+2B\sqrt{\frac{2K \ln 2 + 2\ln 1/\delta}{n}}.
\end{eqnarray*}
Inequalities $(a)$ and $(b)$ are due to the triangle inequality, $(c)$ uses the fact that $l$ is bounded by $B$, that $\sum_{i=1}^K\mu(C_i)=1$  by definition of a multinomial random variable and that $\sum_{j=1}^K \frac{|N_j|}{n}=1$ by definition of the $N_j$. Lastly, $(d)$ is due to the hypothesis of robustness (Equation~\ref{eq:robustness}) and $(e)$ to the application of Proposition~\ref{prop:BHC}.
\end{proof}

The previous bound depends on $K$ which is given by the cover chosen for $\mathcal{Z}$. If for any $K$, the associated $\epsilon(\cdot)$ is a constant (i.e. $\epsilon_K(\mathbf{s})=\epsilon_K$) for any $\mathbf{s}$, we can obtain a bound that holds uniformly for all $K$:
$$|\mathcal{L}(\mathcal{A}_{p_{\mathbf{s}}})-l_{emp}(\mathcal{A}_{p_{\mathbf{s}}})|\leq \inf_{K\geq 1}\left[\epsilon_K+2B\sqrt{\frac{2K \ln 2 + 2\ln1/\delta}{n}}\right].$$

For triplet based metric learning algorithms, by following the definition of robustness given by Equation~\ref{eq:robu_trip} and adapting straightforwardly the losses to triplets such that they output zero for non admissible  ones, Theorem~\ref{th:robu} can be easily extended to obtain the following generalization bound:
\begin{equation}\label{eq:bound_trip}
|\mathcal{L}(\mathcal{A}_{trip_{\mathbf{s}}})-l_{emp}(\mathcal{A}_{trip_{\mathbf{s}}})|\leq \epsilon(trip_{\mathbf{s}})+3B\sqrt{\frac{2K \ln 2 + 2\ln 1/\delta}{n}}.
\end{equation}


\subsection{Pseudo-robustness}

The previous study requires the robustness property to be satisfied for every training pair. In this section, we show that it is possible to relax the robustness such that it must hold only for a subset of the possible pairs, while still providing generalization guarantees. 

\begin{definition}
An algorithm $\mathcal{A}$ is $(K,\epsilon(\cdot),\hat{p}_n(\cdot))$
pseudo-robust for
$K\in\mathbb{N}$, $\epsilon(\cdot): (\mathcal{Z}\times\mathcal{Z})^n \rightarrow
\mathbb{R}$ and $\hat{p}_n(\cdot): (\mathcal{Z}\times\mathcal{Z})^n \rightarrow
\{1,\ldots,n^2\}$, if $\mathcal{Z}$ can be partitioned into $K$ disjoints sets,
denoted 
by $\{C_i\}_{i=1}^K$, such that  for all
$\mathbf{s}\in\mathcal{Z}^n$ IID from $\mu$, there exists a subset of training pairs samples $\hat{p}_{\mathbf{s}} \subseteq p_{\mathbf{s}}$, with $|\hat{p}_{\mathbf{s}}|=\hat{p}_n(p_{\mathbf{s}})$,  
 such that the following holds:\\
$\forall (s_1,s_2)\in \hat{p}_{\mathbf{s}}, \forall z_1,z_2 \in
\mathcal{Z}, \forall i,j=1,\ldots,K$:  if 
$s_1,z_1\in C_i$  and  $s_2,z_2\in C_j$ then
\begin{equation}
|l(\mathcal{A}_{p_{\mathbf{s}}},s_1,s_2)-l(\mathcal{A}_{p_{\mathbf{s}}},z_1,z_2)|\leq \epsilon(p_{\mathbf{s}}).
\end{equation}
\end{definition}
We can easily observe that $(K,\epsilon(\cdot))$-robust is equivalent to $(K,\epsilon(\cdot),n^2)$ pseudo-robust. The following theorem gives the generalization guarantees associated with the pseudo-robustness property.

\begin{theorem}\label{th:pseudo-robustesse}
If a learning algorithm $\mathcal{A}$ is
$(K,\epsilon(\cdot),\hat{p}_n(\cdot))$ pseudo-robust, the training pairs $p_{\mathbf{s}}$ come from a sample  generated by $n$ IID draws from $\mu$, then
for any $\delta>0$, with probability at least $1-\delta$ we have:
$$
|\mathcal{L}(\mathcal{A}_{p_{\mathbf{s}}})-l_{emp}(\mathcal{A}_{p_{\mathbf{s}}})|\leq \frac{\hat{p}_n(p_{\mathbf{s}})}{n^2}\epsilon(p_{\mathbf{s}})+B(\frac{n^2-\hat{p}_n(p_{\mathbf{s}})}{n^2}+2\sqrt{\frac{2K \ln 2 + 2\ln 1/\delta}{n}}).
$$
\end{theorem}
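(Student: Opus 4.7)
The plan is to follow the same chain of inequalities as in the proof of Theorem~\ref{th:robu}, and only modify the step where robustness is invoked, so as to reflect that the bound $\epsilon(p_{\mathbf{s}})$ now only applies to the sub-sample of pairs $\hat{p}_{\mathbf{s}}\subseteq p_{\mathbf{s}}$ on which the algorithm is actually robust.

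First I would introduce $N_i$, the set of indices of points of $\mathbf{s}$ falling in $C_i$, so that $(|N_1|,\dots,|N_K|)$ is a multinomial as in Proposition~\ref{prop:BHC}. Then, writing $\mathcal{L}(\mathcal{A}_{p_{\mathbf{s}}})$ as $\sum_{i,j}\mathbb{E}(l(\mathcal{A}_{p_{\mathbf{s}}},z_1,z_2)\mid z_1\in C_i, z_2\in C_j)\mu(C_i)\mu(C_j)$, I apply the exact same two triangle-inequality insertions as in steps (a)–(b) of the previous proof in order to turn the products $\mu(C_i)\mu(C_j)$ into $\frac{|N_i|}{n}\frac{|N_j|}{n}$. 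As in step (c), the two ``marginal'' terms obtained this way each contribute $B\sum_i\bigl|\mu(C_i)-\tfrac{|N_i|}{n}\bigr|$, which by Proposition~\ref{prop:BHC} is bounded with probability at least $1-\delta$ by $B\sqrt{\tfrac{2K\ln 2 + 2\ln(1/\delta)}{n}}$, giving the $2B\sqrt{\cdot}$ factor in the end.

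The only step that differs from Theorem~\ref{th:robu} is the treatment of the remaining term
\begin{equation*}
\frac{1}{n^2}\sum_{i=1}^K\sum_{j=1}^K\sum_{s_o\in N_i}\sum_{s_l\in N_j}\max_{z\in C_i}\max_{z'\in C_j}\bigl|l(\mathcal{A}_{p_{\mathbf{s}}},z,z')-l(\mathcal{A}_{p_{\mathbf{s}}},s_o,s_l)\bigr|.
\end{equation*}
I would split the inner double sum over training pairs $(s_o,s_l)$ into two groups: those belonging to $\hat{p}_{\mathbf{s}}$ and those not. For a pair $(s_o,s_l)\in\hat{p}_{\mathbf{s}}$, the pseudo-robustness hypothesis applied to $s_o,z\in C_i$ and $s_l,z'\in C_j$ yields that the corresponding maximum is at most $\epsilon(p_{\mathbf{s}})$. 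For a pair $(s_o,s_l)\notin \hat{p}_{\mathbf{s}}$ we have no robustness guarantee, but since $l$ takes values in $[0,B]$ the difference $|l(\cdot,z,z')-l(\cdot,s_o,s_l)|$ is trivially bounded by $B$. There are exactly $\hat{p}_n(p_{\mathbf{s}})$ pairs of the first type and $n^2-\hat{p}_n(p_{\mathbf{s}})$ of the second, so this term is at most $\frac{\hat{p}_n(p_{\mathbf{s}})}{n^2}\epsilon(p_{\mathbf{s}}) + \frac{n^2-\hat{p}_n(p_{\mathbf{s}})}{n^2}B$.

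Putting the two pieces together reproduces exactly the announced bound. There is no real obstacle here; the only thing to be careful about is that the subset $\hat{p}_{\mathbf{s}}$ from the definition of pseudo-robustness depends on $\mathbf{s}$ but is guaranteed to exist for every $\mathbf{s}$, so the decomposition above is valid sample-wise and combines without issue with the high-probability Breteganolle-Huber-Carol event. Note also that when $\hat{p}_n(p_{\mathbf{s}})=n^2$, the extra slack term $B\frac{n^2-\hat{p}_n(p_{\mathbf{s}})}{n^2}$ vanishes and we recover Theorem~\ref{th:robu}, which is the sanity check predicted by the remark that $(K,\epsilon(\cdot))$-robustness coincides with $(K,\epsilon(\cdot),n^2)$ pseudo-robustness.
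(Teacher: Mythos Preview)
Your proposal is correct and follows essentially the same route as the paper's own proof: reuse the chain of inequalities from Theorem~\ref{th:robu} up to the point where robustness is invoked, then split the sum over training pairs into $\hat{p}_{\mathbf{s}}$ (bounded by $\epsilon(p_{\mathbf{s}})$) and its complement (bounded by $B$), and conclude via Proposition~\ref{prop:BHC}. The sanity check you mention for $\hat{p}_n(p_{\mathbf{s}})=n^2$ is also exactly the observation the paper makes just before stating the theorem.
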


\begin{proof} It is similar to that of Theorem~\ref{th:robu} and is given in \ref{proof:th2}.
\end{proof}

This notion of pseudo-robustness is very relevant to metric learning. Indeed, it is often difficult and potentially damaging to optimize the metric with respect to all possibles pairs, and it has been observed in practice that focusing on a subset of carefully-selected pairs (e.g., defined according to nearest-neighbors) gives much better generalization performance \cite{Weinberger2009,Bellet2012a}. Theorem~\ref{th:pseudo-robustesse} confirms that this principle is well-founded: as long as the robustness property is fulfilled for a (large enough) subset of the pairs, the resulting metric has generalization guarantees. Note that this notion of pseudo-robustness can be also easily adapted to triplet based metric learning.

%
%

\section{Necessity of Robustness}
\label{nessec}

We prove here that a notion of weak robustness is actually necessary and sufficient to generalize in a metric learning setup. 
This result is based on  an asymptotic analysis following the work of Xu and Mannor \cite{Xu2012a}. 
We consider pairs of instances coming from an increasing sample of training instances
$\mathbf{s}=(s_1,s_2,\ldots)$ and from a sample of test instances
$\mathbf{t}=(t_1,t_2,\ldots)$ such that both samples are assumed to be
drawn IID from a distribution $\mu$. We use $\mathbf{s}(n)$ and
$\mathbf{t}(n)$ to denote the first $n$ examples of the two samples respectively, while 
$\mathbf{s}^*$ denotes a fixed sequence of examples. 

We use $L(f,p_{\mathbf{t}(n)})=\frac{1}{n^2} \sum_{(s_i,s_j)\in p_{\mathbf{t}(n)}} l(f,s_i,s_j)$ to refer to the average loss given a set of pairs for any learned metric $f$, and $\mathcal{L}(f)=\mathbb{E}_{z,z'\sim
  \mu}l(f,z,z')$ for the expected loss.

We first define a notion of generalizability for metric learning.
\begin{definition}
Given a training pair set $p_{\mathbf{s}^*}$ coming from a sequence of examples $\mathbf{s}^*$, a metric learning method
$\mathcal{A}$ generalizes w.r.t. $p_{\mathbf{s}^*}$ if
$$\lim_n\left|\mathcal{L}(\mathcal{A}_{p_{\mathbf{s}^*(n)}})-L(\mathcal{A}_{p_{\mathbf{s}^*(n)}},p_{\mathbf{s}^*(n)})\right|=0.$$
Furthermore, a learning method
$\mathcal{A}$ generalizes with probability 1 if it generalizes
with respect to the pairs $p_{\mathbf{s}}$ of almost all samples $\mathbf{s}$  IID from $\mu$.
\end{definition}

Note this notion of generalizability implies convergence in mean. We then introduce the notion of weak robustness for metric learning.
\begin{definition}
Given a set of training pairs $p_{\mathbf{s}^*}$ coming from a sequence of examples ${\mathbf{s}^*}$, a metric learning
  method $\mathcal{A}$ is weakly robust with respect to $p_{\mathbf{s}^*}$ if there
  exists a sequence of $\{\mathcal{D}_n\subseteq \mathcal{Z}^n\}$ such
  that $\Pr(\mathbf{t}(n)\in \mathcal{D}_n)\rightarrow 1$ and
$$
\lim_n\left\{\max_{\mathbf{\hat{s}}(n)\in\mathcal{D}_n}\left|L(\mathcal{A}_{p_{\mathbf{s}^*(n)}},p_{\mathbf{\hat{s}}(n)})-L(\mathcal{A}_{p_{\mathbf{s}^*(n)}},p_{\mathbf{s}^*(n)})\right|\right\}=0.
$$
Furthermore, a learning method $\mathcal{A}$ is almost surely weakly robust if it is robust
 with respect to almost all $\mathbf{s}$.
\end{definition}

Recall that the definition of robustness requires the labeled sample space to be
partitioned into disjoints subsets such that if some instances of pairs of
train/test examples belong to the same partition, then they have
similar loss. Weak robustness is a generalization of this notion where
we consider the average loss of testing and training pairs: if for a
large (in the probabilistic sense) subset of data, the
testing loss is close to the training loss, then the algorithm is
weakly robust. From Proposition~\ref{prop:BHC}, we can see that if for any fixed 
$\epsilon>0$ there exists $K$ such that an algorithm $\mathcal{A}$ is $(K,\epsilon)$ robust, then $\mathcal{A}$ is weakly robust. We now give the main result of this section about the necessity of robustness.

\begin{theorem}\label{th:weak}
Given a fixed sequence of training examples $\mathbf{s}^*$, a metric learning
method $\mathcal{A}$ generalizes with respect to $p_{\mathbf{s}^*}$ if and only if
it is weakly robust with respect to $p_{\mathbf{s}^*}$.
\end{theorem}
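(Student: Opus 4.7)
The plan is to prove both implications separately, closely mirroring the strategy of Xu and Mannor \cite{XUrobustness-ML} but adapted to the pair setting.

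For the easy direction (weak robustness $\Rightarrow$ generalization), I would insert the intermediate quantity $\mathbb{E}_{\mathbf{t}(n)}[L(\mathcal{A}_{p_{\mathbf{s}^*(n)}},p_{\mathbf{t}(n)})]$ and apply the triangle inequality. The gap between this expectation and $\mathcal{L}(\mathcal{A}_{p_{\mathbf{s}^*(n)}})$ is only $O(B/n)$, coming from the $n$ diagonal pairs $(t_i,t_i)$ that break IIDness inside $p_{\mathbf{t}(n)}$. The gap between this expectation and the training loss $L(\mathcal{A}_{p_{\mathbf{s}^*(n)}},p_{\mathbf{s}^*(n)})$ is then split according to whether $\mathbf{t}(n)\in\mathcal{D}_n$ or not: on $\mathcal{D}_n$ weak robustness forces the integrand to 0, and on the complement the integrand is uniformly bounded by $B$ while $Pr(\mathbf{t}(n)\notin\mathcal{D}_n)\to 0$ by assumption, so the contribution also vanishes.

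For the harder direction (generalization $\Rightarrow$ weak robustness), the idea is to construct $\mathcal{D}_n$ explicitly. Fix a sequence $\alpha_n\to 0$ converging slowly (e.g.\ $\alpha_n=n^{-1/4}$) and set
$$\mathcal{D}_n=\bigl\{\hat{\mathbf{s}}(n)\in\mathcal{Z}^n : \bigl|L(\mathcal{A}_{p_{\mathbf{s}^*(n)}},p_{\hat{\mathbf{s}}(n)})-\mathcal{L}(\mathcal{A}_{p_{\mathbf{s}^*(n)}})\bigr|\le \alpha_n\bigr\}.$$
To show $Pr(\mathbf{t}(n)\in\mathcal{D}_n)\to 1$, I would apply McDiarmid's inequality to the function $(t_1,\ldots,t_n)\mapsto L(\mathcal{A}_{p_{\mathbf{s}^*(n)}},p_{\mathbf{t}(n)})$: replacing one coordinate $t_i$ changes at most $2n$ of the $n^2$ summands, each bounded by $B$, so the bounded difference is $2B/n$. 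McDiarmid then gives a sub-Gaussian tail of order $\exp(-cn\alpha_n^2)$ around $\mathbb{E}[L(\mathcal{A}_{p_{\mathbf{s}^*(n)}},p_{\mathbf{t}(n)})]$, which is within $O(B/n)$ of $\mathcal{L}(\mathcal{A}_{p_{\mathbf{s}^*(n)}})$ by the diagonal argument above; with $\alpha_n=n^{-1/4}$ this probability tends to $1$.

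Finally, on $\mathcal{D}_n$ a double triangle inequality yields
$$\bigl|L(\mathcal{A}_{p_{\mathbf{s}^*(n)}},p_{\hat{\mathbf{s}}(n)})-L(\mathcal{A}_{p_{\mathbf{s}^*(n)}},p_{\mathbf{s}^*(n)})\bigr|\le \alpha_n+\bigl|\mathcal{L}(\mathcal{A}_{p_{\mathbf{s}^*(n)}})-L(\mathcal{A}_{p_{\mathbf{s}^*(n)}},p_{\mathbf{s}^*(n)})\bigr|,$$
where the first term vanishes by construction and the second by the hypothesis of generalization; taking the max over $\hat{\mathbf{s}}(n)\in\mathcal{D}_n$ proves weak robustness. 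The main obstacle is the concentration step: because $L(f,p_{\mathbf{t}(n)})$ is a U-statistic-like average over $n^2$ dependent pairs rather than a sum of $n$ independent terms, one cannot invoke Hoeffding directly, and the bounded-differences argument in the underlying $n$ IID variables is what carries the proof; one must also carefully keep track of the $O(1/n)$ bias induced by the diagonal pairs so that it is absorbed into $\alpha_n$.
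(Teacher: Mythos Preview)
Your sufficiency direction is essentially the paper's argument (Appendix~\ref{proof:suff}): condition on $\{\mathbf{t}(n)\in\mathcal{D}_n\}$, bound the complementary part by $\delta B$, and use the defining property of $\mathcal{D}_n$ on the main part. You are in fact slightly more careful than the paper, which simply writes $\mathcal{L}(\mathcal{A}_{p_{\mathbf{s}^*(n)}})=\mathbb{E}_{\mathbf{t}(n)}[L(\mathcal{A}_{p_{\mathbf{s}^*(n)}},p_{\mathbf{t}(n)})]$ as an equality and ignores the $O(B/n)$ diagonal bias you isolate.

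For necessity, your argument is correct but organized differently from the paper. The paper proceeds by contrapositive: it first proves Lemma~\ref{lem:div} (if $\mathcal{A}$ is not weakly robust then $|L(\mathcal{A}_{p_{\mathbf{s}^*(n)}},p_{\mathbf{t}(n)})-L(\mathcal{A}_{p_{\mathbf{s}^*(n)}},p_{\mathbf{s}^*(n)})|\ge\epsilon^*$ with probability $\ge\delta^*$ infinitely often), then invokes McDiarmid to get $L(\mathcal{A}_{p_{\mathbf{s}^*(n)}},p_{\mathbf{t}(n)})\to\mathcal{L}(\mathcal{A}_{p_{\mathbf{s}^*(n)}})$ in probability, and combines the two to contradict generalization. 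You instead construct $\mathcal{D}_n$ directly as a McDiarmid concentration event around $\mathcal{L}(\mathcal{A}_{p_{\mathbf{s}^*(n)}})$ at a shrinking radius $\alpha_n$, and close with a triangle inequality against the generalization hypothesis. Both routes rely on the same bounded-differences concentration ($c_i\le 2B/n$), so the analytic content is identical; your version is a bit more streamlined because it avoids the nested contradictions of Lemma~\ref{lem:div}, at the cost of having to pick an explicit rate $\alpha_n$ and track the diagonal bias so that it is swallowed by $\alpha_n$.
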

\begin{proof2}
Following \cite{Xu2012a}, the sufficiency is obtained by the fact that the testing pairs are obtained from a sample $\mathbf{t}(n)$ constituted of $n$ IID instances. We give the proof in \ref{proof:suff}.

For the necessity, we need the following lemma which is a direct adaptation of a result introduced in \cite{Xu2012a} (Lemma 2). We provide the proof in \ref{proof:lem1} for the sake of completeness. 
\end{proof2}
\begin{lemma}\label{lem:div}
Given $\mathbf{s}^*$, if a learning method is not weakly robust
w.r.t. $p_{\mathbf{s}^*}$, there exists $\epsilon^*,\delta^*>0$ such that
the following holds for infinitely many $n$:
\begin{equation}\label{eq:div}
\Pr(|L(\mathcal{A}_{p_{\mathbf{s}^*(n)}},p_{\mathbf{t}(n)})-L(\mathcal{A}_{p_\mathbf{s}^*(n)},p_{\mathbf{s}^*(n)})|\geq\epsilon^*)\geq \delta^*.
\end{equation}
\end{lemma}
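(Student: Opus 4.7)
The plan is to prove the contrapositive: assume the stated conclusion fails and construct a sequence of sets $\{\mathcal{D}_n\}$ that witnesses weak robustness of $\mathcal{A}$ with respect to $p_{\mathbf{s}^*}$, directly contradicting the hypothesis. Write $\Delta_n(\hat{\mathbf{s}}) := |L(\mathcal{A}_{p_{\mathbf{s}^*(n)}}, p_{\hat{\mathbf{s}}(n)}) - L(\mathcal{A}_{p_{\mathbf{s}^*(n)}}, p_{\mathbf{s}^*(n)})|$ to lighten notation. The negation of the lemma's conclusion is: for every $\epsilon > 0$ and every $\delta > 0$, the inequality $Pr(\Delta_n(\mathbf{t}) \geq \epsilon) \geq \delta$ holds for only finitely many $n$; equivalently, $Pr(\Delta_n(\mathbf{t}) \geq \epsilon) \to 0$ for every fixed $\epsilon > 0$.

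The first technical step is a diagonalization that converts this ``for each fixed $\epsilon$'' statement into a single vanishing threshold. For each integer $k \geq 1$, choose $N_k$ so that $Pr(\Delta_n(\mathbf{t}) \geq 1/k) < 1/k$ for all $n \geq N_k$, and arrange $N_1 < N_2 < \cdots$. Define $\epsilon_n = 1/k$ whenever $N_k \leq n < N_{k+1}$; then $\epsilon_n \to 0$ and simultaneously $Pr(\Delta_n(\mathbf{t}) \geq \epsilon_n) \to 0$.

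Next I would set
\[
\mathcal{D}_n \;=\; \bigl\{\hat{\mathbf{s}} \in \mathcal{Z}^n : \Delta_n(\hat{\mathbf{s}}) < \epsilon_n\bigr\}.
\]
By construction, $Pr(\mathbf{t}(n) \in \mathcal{D}_n) = 1 - Pr(\Delta_n(\mathbf{t}) \geq \epsilon_n) \to 1$, and for every $\hat{\mathbf{s}}(n) \in \mathcal{D}_n$ we have $\Delta_n(\hat{\mathbf{s}}) < \epsilon_n$, so
\[
\max_{\hat{\mathbf{s}}(n) \in \mathcal{D}_n} \Delta_n(\hat{\mathbf{s}}) \leq \epsilon_n \longrightarrow 0.
\]
This is exactly the definition of weak robustness of $\mathcal{A}$ with respect to $p_{\mathbf{s}^*}$, contradicting the assumption that $\mathcal{A}$ is not weakly robust. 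Hence the conclusion of the lemma must hold.

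The only genuinely subtle step is the diagonalization: one has to choose a threshold that decays slowly enough for the probability to still vanish, but fast enough to beat every fixed $\epsilon$. Everything else is bookkeeping around the definitions, and the argument is essentially identical in structure to Lemma~2 of \cite{XUrobustness-ML} once we replace single-instance losses by the averaged pairwise loss $L(\cdot, p_{\cdot})$, which requires no new probabilistic ingredient because the test pairs $p_{\mathbf{t}(n)}$ are already a deterministic function of the IID sample $\mathbf{t}(n)$.
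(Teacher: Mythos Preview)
Your proof is correct and follows essentially the same contrapositive-plus-diagonalization argument as the paper's proof (which is itself a direct adaptation of Lemma~2 in \cite{XUrobustness-ML}): negate the conclusion to get $Pr(\Delta_n(\mathbf{t})\geq\epsilon)\to 0$ for every fixed $\epsilon$, pick thresholds $1/k$ with indices $N_k$, and set $\mathcal{D}_n$ to be the sub-level set at the current threshold. The only cosmetic differences are that the paper writes $v(n)=\max\{v: N(v)\leq n,\ v\leq n\}$ where you define $\epsilon_n$ piecewise on $[N_k,N_{k+1})$, and the paper uses $\sup$ rather than $\max$ over $\mathcal{D}_n$; neither affects the argument.
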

\begin{proof3}

Now, recall that $l$ is positive and uniformly bounded by $B$, thus by the
McDiarmid inequality (recalled in \ref{mcdiarmid}) we have that for any $\epsilon,\delta>0$ there
exists an index $n^*$ such that for any $n>n^*$, with probability at least
$1-\delta$, we have
$|\frac{1}{n^2}\sum_{(t_i,t_j)\in p_{\mathbf{t}(n)}}l(\mathcal{A}_{p_{\mathbf{s}^*(n)}},t_i,t_j)-\mathcal{L}(\mathcal{A}_{p_{\mathbf{s}^*(n)}})|\leq
\epsilon$. This implies the convergence  $\mathcal{L}(\mathcal{A}_{p_{\mathbf{s}^*(n)}},p_{\mathbf{t}(n)})-\mathcal{L}(\mathcal{A}_{p_{\mathbf{s}^*(n)}})\stackrel{Pr}{\rightarrow}0$, and thus from a given index:
\begin{equation}\label{eq:lim}
|\mathcal{L}(\mathcal{A}_{p_{\mathbf{s}^*(n)}},p_{\mathbf{t}(n)})-\mathcal{L}(\mathcal{A}_{p_{\mathbf{s}^*(n)}})|\leq \frac{\epsilon^*}{2}.
\end{equation}

Now, by contradiction, suppose algorithm $\mathcal{A}$ is not weakly robust, Lemma~\ref{lem:div}
implies Equation \ref{eq:div} holds for infinitely many $n$. This combined with
Equation~\ref{eq:lim} implies that for infinitely many $n$:
$$
|\mathcal{L}(\mathcal{A}_{p_{\mathbf{s}^*(n)}},p_{\mathbf{t}(n)})-L(\mathcal{A}_{p_{\mathbf{s}^*(n)}},p_{\mathbf{s}^*(n)})|\geq \frac{\epsilon^*}{2}
$$
which means $\mathcal{A}$ does not generalize, thus the necessity of
weak robustness is established.
\end{proof3}

The following corollary follows immediately from Theorem~\ref{th:weak}.
\begin{corollary}
A metric learning method $\mathcal{A}$ generalizes with probability 1 if and
only if it is almost surely weakly robust.
\end{corollary}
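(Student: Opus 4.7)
The plan is to deduce the corollary as an immediate pointwise consequence of Theorem~\ref{th:weak}, with the only extra ingredient being the translation of the ``with probability 1'' quantifier into set-theoretic language about almost-sure events under the product measure $\mu^{\mathbb{N}}$.

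First, I would unpack the two definitions. Let $G \subseteq \mathcal{Z}^{\mathbb{N}}$ denote the set of infinite sequences $\mathbf{s}$ such that $\mathcal{A}$ generalizes w.r.t.\ $p_{\mathbf{s}}$, i.e.\ $\lim_n |\mathcal{L}(\mathcal{A}_{p_{\mathbf{s}(n)}}) - L(\mathcal{A}_{p_{\mathbf{s}(n)}}, p_{\mathbf{s}(n)})| = 0$. Similarly, let $W \subseteq \mathcal{Z}^{\mathbb{N}}$ denote the set of infinite sequences $\mathbf{s}$ such that $\mathcal{A}$ is weakly robust w.r.t.\ $p_{\mathbf{s}}$. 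By Definition, $\mathcal{A}$ generalizes with probability $1$ iff $\mu^{\mathbb{N}}(G) = 1$ and $\mathcal{A}$ is almost surely weakly robust iff $\mu^{\mathbb{N}}(W) = 1$.

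The key step is to apply Theorem~\ref{th:weak} with the fixed sequence $\mathbf{s}^*$ instantiated as an arbitrary deterministic sequence $\mathbf{s} \in \mathcal{Z}^{\mathbb{N}}$. The theorem asserts that for every such sequence, generalization w.r.t.\ $p_{\mathbf{s}}$ is equivalent to weak robustness w.r.t.\ $p_{\mathbf{s}}$. Hence the two sets $G$ and $W$ coincide as subsets of $\mathcal{Z}^{\mathbb{N}}$, and in particular $\mu^{\mathbb{N}}(G) = \mu^{\mathbb{N}}(W)$. Consequently $\mu^{\mathbb{N}}(G) = 1$ iff $\mu^{\mathbb{N}}(W) = 1$, which is precisely the statement of the corollary.

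There is essentially no obstacle here: the work has already been done in Theorem~\ref{th:weak}, and the corollary is just the observation that an equivalence which holds for every fixed sequence automatically lifts to an equivalence of the corresponding ``almost sure'' statements under $\mu^{\mathbb{N}}$. The only mild subtlety worth mentioning for completeness is measurability of the sets $G$ and $W$, but since both are defined by countable limit/supremum operations on measurable functions of the first $n$ coordinates, they lie in the product $\sigma$-algebra, so $\mu^{\mathbb{N}}(G)$ and $\mu^{\mathbb{N}}(W)$ are well-defined. I would therefore conclude the proof in two or three lines, citing Theorem~\ref{th:weak} for the pointwise equivalence $G = W$.
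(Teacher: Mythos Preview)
Your proposal is correct and matches the paper's approach: the paper simply states that the corollary ``follows immediately from Theorem~\ref{th:weak}'' without giving any further argument, and your write-up is exactly the natural unpacking of that immediacy (pointwise equivalence $G=W$ lifts to equality of the almost-sure events). The measurability remark is a reasonable aside but goes beyond what the paper bothers to mention.
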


\section{Examples of Robust Metric Learning Algorithms}
\label{exsec}

We first restrict our attention to Mahalanobis distance learning algorithms of the following form:
\begin{eqnarray}
\displaystyle\min_{\mathbf{M} \succeq 0} & c\|\mathbf{M}\| + \frac{1}{n^2}\displaystyle\sum_{(s_i,s_j)\in p_s}g(y_{ij}[1-f(\mathbf{M},x_i,x_j)]),
\label{generalform}
\end{eqnarray}
where $s_i=(x_i,y_i)$, $s_j=(x_j,y_j)$, $y_{ij} = 1$ if $y_i=y_j$ and $-1$ otherwise, $f(\mathbf{M},x_i,x_j) = (x_i-x_j)^T\mathbf{M}(x_i-x_j)$ is the Mahalanobis distance parameterized by the $d\times d$ PSD matrix $\mathbf{M}$, $\|\cdot\|$ some matrix norm and $c$ a regularization parameter. The loss function $l(f,s_i,s_j) = g(y_{ij}[1-f(\mathbf{M},x_i,x_j)])$ outputs a small value when its input is large positive and a large value when it is large negative. We assume $g$ to be nonnegative and Lipschitz continuous with Lipschitz constant $U$. Lastly, $g_0 = \sup_{s_i,s_j}g(y_{ij}[1-f(\mathbf{0},x_i,x_j)])$ is the largest loss when $\mathbf{M}$ is $\mathbf{0}$.
The general form \eqref{generalform} encompasses many existing metric learning formulations. For instance, in the case of the hinge loss and Frobenius norm regularization, we recover \cite{Jin2009}, while the family of formulations studied in \cite{Kunapuli2012} corresponds to a trace norm regularizer.

To prove the robustness of \eqref{generalform}, we will use the following theorem, which is based on the geometric intuition behind robustness. It essentially says that if a metric learning algorithm achieves approximately the same testing loss for testing pairs that are close to each other, then it is robust.\footnote{We provide a similar theorem for the case of triplets in \ref{tripletthm}.}
\begin{theorem}
Fix $\gamma>0$ and a metric $\rho$ of $\mathcal{Z}$. Suppose $\mathcal{A}$ satisfies:\\
$\forall z_1,z_2,z'_1,z'_2 : z_1,z_2\in \mathbf{s}, \rho(z_1,z'_1)\leq \gamma, \rho(z_2,z'_2)\leq \gamma$,
$$|l(\mathcal{A}_{p_\mathbf{s}},z_1,z_2)-l(\mathcal{A}_{p_\mathbf{s}},z'_1,z'_2)|\leq \epsilon(p_\mathbf{s})$$
and $\mathcal{N}(\gamma/2,\mathcal{Z},\rho) < \infty$. Then $\mathcal{A}$ is $(\mathcal{N}(\gamma/2,\mathcal{Z},\rho),\epsilon(p_\mathbf{s}))$-robust.
\label{testtheorem}
\end{theorem}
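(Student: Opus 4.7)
The plan is to produce the required partition of $\mathcal{Z}$ from a minimum $\gamma/2$-cover and then verify that the partition inherits the hypothesis as exactly the robustness bound of Definition~\ref{def:robu}.

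First, let $K=\mathcal{N}(\gamma/2,\mathcal{Z},\rho)$ and let $\{\hat{z}_1,\ldots,\hat{z}_K\}$ be a $\gamma/2$-cover of $\mathcal{Z}$ (guaranteed to exist and be finite by assumption). From this cover I would build $K$ disjoint sets $\{C_i\}_{i=1}^K$ by assigning each $z\in\mathcal{Z}$ to the cell indexed by (say) the smallest $i$ such that $\rho(z,\hat{z}_i)\leq\gamma/2$, with ties broken arbitrarily. This yields a partition $\mathcal{Z}=\bigsqcup_{i=1}^K C_i$ with the key property that any two points $u,v\in C_i$ satisfy $\rho(u,v)\leq\rho(u,\hat{z}_i)+\rho(\hat{z}_i,v)\leq\gamma$ by the triangle inequality.

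Next, I would simply verify Definition~\ref{def:robu} against this partition. Fix any sample $\mathbf{s}$, any $(s_1,s_2)\in p_{\mathbf{s}}$, any $z_1,z_2\in\mathcal{Z}$, and any indices $i,j$ with $s_1,z_1\in C_i$ and $s_2,z_2\in C_j$. The partition property just established gives $\rho(s_1,z_1)\leq\gamma$ and $\rho(s_2,z_2)\leq\gamma$. Moreover, $s_1$ and $s_2$ both lie in $\mathbf{s}$. Therefore the hypothesis of the theorem applies directly with the pair $(s_1,s_2)$ playing the role of $(z_1,z_2)$ in the statement and $(z_1,z_2)$ playing the role of $(z'_1,z'_2)$, yielding
$$|l(\mathcal{A}_{p_{\mathbf{s}}},s_1,s_2)-l(\mathcal{A}_{p_{\mathbf{s}}},z_1,z_2)|\leq\epsilon(p_{\mathbf{s}}),$$
which is exactly \eqref{eq:robustness}. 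Hence $\mathcal{A}$ is $(K,\epsilon(p_{\mathbf{s}}))$-robust.

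There is no substantive obstacle here: the whole argument is a single application of the triangle inequality to pass from ``same cell'' to ``distance at most $\gamma$'', plus a direct rewrite of the hypothesis to match the definition. The only thing worth being careful about is stating the cell-assignment rule explicitly so that the $C_i$ are genuinely disjoint (rather than just a cover), which is why I use the ``smallest index'' tie-breaking rule above; this is the reason the theorem needs a $\gamma/2$-cover rather than a $\gamma$-cover, since doubling the radius via the triangle inequality is what guarantees diameter at most $\gamma$ inside each cell.
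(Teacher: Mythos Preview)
Your proof is correct and follows essentially the same route as the paper: build a partition of $\mathcal{Z}$ from a $\gamma/2$-cover so that each cell has diameter at most $\gamma$, then read off the robustness inequality directly from the hypothesis. Your version is in fact more explicit than the paper's (which does not spell out the tie-breaking rule that turns the cover into a genuine partition, nor the triangle-inequality step), and it covers $\mathcal{Z}$ directly rather than first covering $X$ and then multiplying by $|Y|$, which matches the theorem statement as written.
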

\begin{proof}
By definition of covering number, we can partition ${X}$ in $\mathcal{N}(\gamma/2,{X},\rho)$ subsets such that each subset has a diameter less or equal to $\gamma$. Furthermore, since ${Y}$ is a finite set, we can partition $\mathcal{Z}$ into $|Y|\mathcal{N}(\gamma/2,{X},\rho)$ subsets $\{C_i\}$ such that $z_1,z'_1\in C_i \Rightarrow \rho(z_1,z'_1)\leq \gamma$.
Therefore, $\forall z_1,z_2,z'_1,z'_2 : z_1,z_2\in \mathbf{s}, \rho(z_1,z'_1)\leq \gamma, \rho(z_2,z'_2)\leq \gamma$, 
$$|l(\mathcal{A}_{p_\mathbf{s}},z_1,z_2)-l(\mathcal{A}_{p_\mathbf{s}},z'_1,z'_2)|\leq \epsilon(p_\mathbf{s}),$$
this implies
$z_1,z_2\in \mathbf{s}, z_1,z'_1\in C_i,z_2,z'_2\in C_j \Rightarrow |l(\mathcal{A}_{p_\mathbf{s}},z_1,z_2)-l(\mathcal{A}_{p_\mathbf{s}},z'_1,z'_2)|\leq \epsilon(p_\mathbf{s}),$
which establishes the theorem.
\end{proof}

This theorem provides a roadmap for deriving generalization guarantees based on the robustness framework. Indeed, given a partition of the input space, one must bound the deviation between the loss for any pair of examples with corresponding elements belonging to the same partitions. This bound is generally a constant that depends on the problem to solve and the thinness of the partition defined by $\gamma$. This bound tends to zero as $\gamma\rightarrow 0$, which ensures the consistency of the approach. While this framework is rather general, the price to pay is the relative looseness of the bounds, as discussed in Section~\ref{disc}.

Recall that we assume that $\forall x\in X$, $\|x\|\leq R$ for some convenient norm $\|\cdot\|$. Following Theorem~\ref{testtheorem}, we now prove the robustness of \eqref{generalform} when $\|\mathbf{M}\|$ is the Frobenius norm.
\begin{example}[Frobenius norm]
Algorithm \eqref{generalform} with the Frobenius norm  $\|\mathbf{M}\| = \|\mathbf{M}\|_{\mathcal{F}} = \sqrt{\sum_{i=1}^d\sum_{j=1}^d m_{ij}^2}$ is $(|Y|\mathcal{N}(\gamma/2,{X},\|\cdot\|_2),\frac{8UR\gamma g_0}{c})$-robust.
\label{ex1}
\end{example}
\begin{proof}
Let $\mathbf{M^*}$ be the solution given training data $p_s$. Thus, due to optimality of $\mathbf{M^*}$, we have
\begin{eqnarray*}
\lefteqn{c\|\mathbf{M^*}\|_{\mathcal{F}} + \frac{1}{n^2}\displaystyle\sum_{(s_i,s_j)\in p_s}g(y_{ij}[1-f(\mathbf{M},x_i,x_j)]) \leq} \\
&\ &c\|\mathbf{0}\|_{\mathcal{F}} + \frac{1}{n^2}\displaystyle\sum_{(s_i,s_j)\in p_s}g(y_{ij}[1-f(\mathbf{0},x_i,x_j)]) = g_0
\end{eqnarray*}
and thus $\|\mathbf{M^*}\|_{\mathcal{F}} \leq g_0/c$.
We can partition $\mathcal{Z}$ as $|Y|\mathcal{N}(\gamma/2,{X},\|\cdot\|_2)$ sets, such that if $z$ and $z'$ belong to the same set, then $y=y'$ and $\|x-x'\|_2 \leq \gamma$. Now, for $z_1,z_2,z'_1,z'_2\in\mathcal{Z}$, if $y_1=y'_1$, $\|x_1-x'_1\|_2 \leq \gamma$, $y_2=y'_2$ and $\|x_2-x'_2\|_2 \leq \gamma$, then:
\begin{eqnarray*}
\lefteqn{|g(y_{12}[1-f(\mathbf{M^*},x_1,x_2)]) - g(y'_{12}[1-f(\mathbf{M^*},x'_1,x'_2)])|}\\
& \leq & U|(x_1-x_2)^T\mathbf{M^*}(x_1-x_2)-(x'_1-x'_2)^T\mathbf{M^*}(x'_1-x'_2)|\\
& = & U|(x_1-x_2)^T\mathbf{M^*}(x_1-x_2)-(x_1-x_2)^T\mathbf{M^*}(x'_1-x'_2)\\
& & + ~(x_1-x_2)^T\mathbf{M^*}(x'_1-x'_2)|-(x'_1-x'_2)^T\mathbf{M^*}(x'_1-x'_2)|\\
& = & U|(x_1-x_2)^T\mathbf{M^*}(x_1-x_2-(x'_1+x'_2)) +\\ 
& &(x_1-x_2-(x'_1+x'_2))^T\mathbf{M^*}(x'_1+x'_2)|\\
& \leq & U (|(x_1-x_2)^T\mathbf{M^*}(x_1-x'_1)| + |(x_1-x_2)^T\mathbf{M^*}(x'_2-x_2)|\\
& & + ~|(x_1-x'_1)^T\mathbf{M^*}(x'_1+x'_2)| + |(x'_2-x_2)^T\mathbf{M^*}(x'_1+x'_2)|)\\
& \leq & U(\|x_1-x_2\|_2\|\mathbf{M^*}\|_{\mathcal{F}}\|x_1-x'_1\|_2 + \|x_1-x_2\|_2\|\mathbf{M^*}\|_{\mathcal{F}}\|x'_2-x_2\|_2\\
& & + ~\|x_1-x'_1\|_2\|\mathbf{M^*}\|_{\mathcal{F}}\|x'_1-x'_2\|_2 + \|x'_2-x_2\|_2\|\mathbf{M^*}\|_{\mathcal{F}}\|x'_1-x'_2\|_2)\\
& \leq&  \frac{8UR\gamma g_0}{c}
\end{eqnarray*}
Hence, the example holds by Theorem~\ref{testtheorem}.
\end{proof}

Note that for the special case of Example~\ref{ex1}, a generalization bound (with same order of convergence rate) based on uniform stability was derived in \cite{Jin2009}.
However, it is known that sparse algorithms are not stable \cite{Xu2012}, and thus stability-based analysis fails to assess the generalization ability of recent sparse metric learning approaches \cite{Rosales2006,Qi2009,Ying2009,McFee2010,Kunapuli2012}.
The key advantage of robustness over stability is that we can obtain bounds similar to the Frobenius case for arbitrary $p$-norms (or even any regularizer which is bounded below by some $p$-norm) using equivalence of norms arguments. To illustrate this, we show the robustness when $\|\mathbf{M}\|$ is the $\ell_1$ norm (used in \cite{Rosales2006,Qi2009}) which promotes sparsity at the entry level, the $\ell_{2,1}$ norm (used e.g. in \cite{Ying2009}) which induces sparsity at the column/row level, and the trace norm (used e.g. in \cite{McFee2010,Kunapuli2012}) which favors low-rank matrices.\footnote{In the last two cases, the linear projection space of the data induced by the learned Mahalanobis distance is of lower dimension than the original space, allowing more efficient computations and reduced memory usage.} The proofs are reminiscent of that of Example~\ref{ex1} and can be found in \ref{proof:ex2} and \ref{proof:ex3}, respectively.

\begin{example}[$\ell_1$ norm]
Algorithm \eqref{generalform} with $\|\mathbf{M}\| = \|\mathbf{M}\|_1$ is $(|Y|\mathcal{N}(\gamma,\mathcal{X},\|\cdot\|_1),\frac{8UR\gamma g_0}{c})$-robust.
\label{ex2}
\end{example}

\begin{example}[$\ell_{2,1}$ norm and trace norm]
Consider Algorithm \eqref{generalform} with\newline $\|\mathbf{M}\|~=~\|\mathbf{M}\|_{2,1}~=~\sum_{i=1}^d \|m^i\|_2$, where $m^i$ is the $i$-th column of $\mathbf{M}$. This algorithm is $(|Y|\mathcal{N}(\gamma,\mathcal{X},\|\cdot\|_2),\frac{8UR\gamma g_0}{c})$-robust. The same holds for the trace norm $\|\mathbf{M}\|_*$, which is the sum of the singular values of $\mathbf{M}$.
\label{ex3}
\end{example}

Some metric learning algorithms have  kernelized versions, for instance~\cite{Schultz2003,Davis2007}. In the following example we show robustness for a kernelized formulation. The proof can be found in \ref{proof:ex4}.
\begin{example}[Kernelization]\label{ex:kernel} Consider the kernelized version of \eqref{generalform}:
\begin{eqnarray}
\displaystyle\min_{\mathbf{M} \succeq 0} & c\|\mathbf{M}\|_{\mathbb{H}} + \frac{1}{n^2}\displaystyle\sum_{(s_i,s_j)\in p_s}g(y_{ij}[1-f(\mathbf{M},\phi(x_i),\phi(x_j))]),
\label{generalform_kernel}
\end{eqnarray}
where $\phi(\cdot)$ is a feature mapping to a kernel space $\mathbb{H}$,
$\|\cdot\|_{\mathbb{H}}$ the norm function of $\mathbb{H}$ and
$k(\cdot,\cdot)$ the kernel function. 
Consider a cover of $X$ by $\|\cdot\|_2$ and let
$f_{\mathbb{H}}(\gamma)\stackrel{\bigtriangleup}{=}\max_{a,b\in X,
  \|a-b\|_2\leq \gamma}(k(a,a)+k(b,b)-2k(a,b))$ and
$B_\gamma=\max_{x\in X}\sqrt{k(x,x)}$. If the kernel function is
continuous, $B_\gamma$ and $f_{\mathbb{H}}$ are finite for any
$\gamma>0$ and thus Algorithm~\ref{generalform_kernel} is
$(|Y|\mathcal{N}(\gamma,X,\|\cdot\|_2),\frac{8 U B_\gamma
  \sqrt{f_{\mathbb{H}}} g_0}{c})$-robust.
  \label{ex4}
\end{example}

Finally, the flexibility of our framework allows us to derive bounds for other forms of metric as well as for formulations based on triplet constraints using the same proof techniques as above. We illustrate this in Example~\ref{ex5} and Example~\ref{ex6}, and for the sake of completeness we provide the proofs in \ref{proof:ex5} and \ref{proof:ex6} respectively.

\begin{example} Consider Algorithm \eqref{generalform} with bilinear similarity $f(\mathbf{M},x_i,x_j) = x_i^T\mathbf{M}x_j$ instead of the Mahalanobis distance, as studied in \cite{Chechik2009,Qamar2010,Bellet2012a}. For the regularizers considered in Examples~\ref{ex1} -- \ref{ex3}, we can improve the robustness to $2UR\gamma g_0/c$ (due to the simpler form of the bilinear similarity).
\label{ex5}
\end{example}

\begin{example}
 Using triplet-based robustness (Equation~\ref{eq:robu_trip}), we can show the robustness of two popular triplet-based metric learning approaches \cite{Schultz2003,Ying2009} for which no generalization guarantees were known (to the best of our knowledge). These algorithms have the following form:
$$\displaystyle\min_{\mathbf{M} \succeq 0} c\|\mathbf{M}\| + \frac{1}{|trip_\mathbf{s}|}\displaystyle\sum_{(s_i,s_j,s_k)\in trip_\mathbf{s}}[1-(x_i-x_k)^T\mathbf{M}(x_i-x_k)+(x_i-x_j)^T\mathbf{M}(x_i-x_j)]_+,$$
where $\|\mathbf{M}\|$ = $\|\mathbf{M}\|_{\mathcal{F}}$ in \cite{Schultz2003} or $\|\mathbf{M}\| = \|\mathbf{M}\|_{1,2}$ in \cite{Ying2009}. These methods are $(\mathcal{N}(\gamma,\mathcal{Z},\|\cdot\|_2),\frac{16UR\gamma g_0}{c})$-robust (the additional factor 2 comes from the use of triplets instead of pairs).
\label{ex6}
\end{example}



\section{Discussion}
\label{disc}

This section discusses the bounds derived from
the proposed framework and put then into perspective with other approaches. 

As seen in the previous section, our approach is rather general and allows one to
derive generalization bounds for many metric learning
methods. The counterpart of this generality is the relative looseness of
the resulting bounds: although the $O(1/\sqrt{n})$ convergence rate is the same as in the alternative frameworks presented below, the covering number constants are difficult to estimate and can be large. Therefore, these bounds are useful to establish the
consistency of a metric learning approach but do not provide sharp estimates of the generalization loss. This is in accordance with the original robustness bounds introduced in \cite{Xu2010,Xu2012a}.

The guarantees proposed in \cite{Bian2011,Bian2012} can be tighter but hold only under strong assumptions on the distribution of examples. Morever, these results only apply to a specific metric learning formulation and it is not clear how they can be adapted to more general forms. Bounds based on uniform stability \cite{Jin2009} are also tighter and can deal with various loss functions, but fail to address sparsity-inducing regularizers. This is known to be a general limitation of stability-based analysis \cite{Xu2012}.


More recently, independently and in parallel to our work, generalization bounds for metric learning based on Rademacher analysis have
been proposed \cite{Cao2012a,Guo2014}. These bounds are
 tighter than the ones obtained with robustness and can tackle some sparsity-inducing regularizers. Their derivation is however more involved as it requires to compute
Rademacher average estimates related to the matrix dual norm. For this reason, their analysis is limited to matrix norm regularization, while our framework can essentially accommodate any regularizer that is bounded below by some matrix $p$-norm (following the same proof technique as in Section~\ref{exsec}). Furthermore, robustness is flexible enough to tackle other settings (such as triplet-based constraints), as illustrated in Section~\ref{exsec}.

We conclude this discussion by noting that the proposed framework can be used to obtain generalization bounds
for linear classifiers that use the learned metrics, following the work of
\cite{Bellet2012a,Guo2014}. 

\section{Conclusion}
\label{conclu}

We proposed a new theoretical framework for evaluating the
generalization ability of metric learning based on the notion of
algorithm robustness originally introduced in \cite{Xu2012a}. 
We showed that a weak notion of robustness characterizes the
generalizability of metric learning algorithms, justifying that
robustness is fundamental for such algorithms. 
The proposed framework has an intuitive geometric meaning and allows us to derive generalization bounds for a large
class of algorithms with different regularizations (such as sparsity
inducing norms), showing that it has a wider applicability than existing frameworks. 
 Moreover, few algorithm-specific arguments are needed. The price to pay is the relative looseness of the resulting bounds.
 
A perspective of this work is to take advantage of the generality and flexibility of the robustness framework to tackle more complex metric learning settings, for instance other regularizers regularizers (such as the LogDet divergence used in \cite{Davis2007,Jain2008}), methods that learn multiple metrics (e.g., \cite{Wang2012b,Shi2014}), and metric learning for domain adaptation \cite{Kulis2011,Geng2011}. It is also promising to investigate whether robustness could be used to derive guarantees for online algorithms such as \cite{Shalev-Shwartz2004,Jain2008,Chechik2009}.

Another exciting direction for future work is to investigate new metric learning algorithms based on the robustness property. For instance, given a partition of the labeled input space and for any two regions, such an algorithm could minimize the maximum loss over pairs of examples belonging to each region. This is reminiscent of concepts from robust optimization \cite{Ben-Tal2009} and could be useful to deal with noisy settings.

\appendix

\section{Appendix}

\section{Proof of  Theorem~\ref{th:pseudo-robustesse} (pseudo-robustness)}
\label{proof:th2}

\begin{proof}
From the proof of Theorem~\ref{th:robu}, we can easily deduce that:
\begin{small}
\begin{eqnarray*}
\lefteqn{|\mathcal{L}(\mathcal{A}_{p_{\mathbf{s}}})-l_{emp}(\mathcal{A}_{p_{\mathbf{s}}})|\leq 2B\sum_{i=1}^K|\frac{|N_i|}{n}-\mu(C_i)|+}\\
&\left|\sum_{i,j=1}^K\mathbb{E}_{z_1,z_2\sim
    \mu}\left(l(\mathcal{A}_{p_{\mathbf{s}}},z_1,z_2\right)|z_1\in C_i,z_2\in
  C_j)\frac{|N_i||N_j|}{n}
-\frac{1}{n^2}\sum_{i,j=1}^nl(\mathcal{A}_{p_{\mathbf{s}}},s_i,s_j)\right|.&
\end{eqnarray*}
\end{small}
Then, we have
\begin{eqnarray*}
&\leq&2B\sum_{i=1}^K|\frac{|N_i|}{n}-\mu(C_i)| +\\
&&\left|\frac{1}{n^2}\sum_{i,j=1}^K\sum_{(s_o,s_l)\in\hat{p}(\mathbf{s})}
\sum_{s_o\in N_i}\sum_{s_l\in  N_j}\max_{z\in C_i}\max_{z'\in C_j}|l(\mathcal{A}_{p_{\mathbf{s}}},z,z')-l(\mathcal{A}_{p_{\mathbf{s}}},s_o,s_l)|\right|+\\
&&\left|\frac{1}{n^2}\sum_{i,j=1}^K\sum_{(s_o,s_l)\not\in\hat{p}(\mathbf{s})}\sum_{s_o\in N_i}\sum_{s_l\in  N_j}\max_{z\in C_i}\max_{z'\in C_j}|l(\mathcal{A}_{p_{\mathbf{s}}},z,z')- l(\mathcal{A}_{p_{\mathbf{s}}},s_o,s_l)|\right|\\
&\leq&\frac{\hat{p}_n(p_{\mathbf{s}})}{n^2}\epsilon(p_{\mathbf{s}})+B\left(\frac{n^2-\hat{p}_n(p_{\mathbf{s}})}{n^2}+2\sqrt{\frac{2K \ln 2 + 2\ln 1/\delta}{n}}\right).
\end{eqnarray*}
The second inequality is obtained by the triangle inequality, the last one is obtained by the application of Proposition~\ref{prop:BHC}, the hypothesis of pseudo-robustness and the fact that $l$ is positive and bounded by $B$, thus we have $|l(\mathcal{A}_{p_{\mathbf{s}}},z,z')-l(\mathcal{A}_{p_{\mathbf{s}}},s_o,s_l)|\leq B$.
\end{proof}

\section{Proof of sufficiency of Theorem~\ref{th:weak}}
\label{proof:suff}

\begin{proof} The proof of sufficiency closely follows the first part of the proof of Theorem 8 in \cite{Xu2012a}. 
When  $\mathcal{A}$ is weakly robust, there exits a sequence $\{{D}_n\}$
such that for any $\delta,\epsilon>0$ there exists
$N(\delta,\epsilon)$ such that for all $n>N(\delta,\epsilon)$,
$\Pr(\mathbf{t}(n)\in D_n)>1-\delta$ and
\begin{equation}\label{eq:d_n}
\max_{\mathbf{\hat{s}}(n)\in {D}_n}\left|L(\mathcal{A}_{p_{\mathbf{s}^*(n)}},p_{\mathbf{\hat{s}}(n)})-L(\mathcal{A}_{p_{\mathbf{s}^*(n)}},p_{\mathbf{{s}}^*(n)})\right|<\epsilon.
\end{equation} 
Therefore for any  $n>N(\delta,\epsilon)$,
\begin{eqnarray*}
\lefteqn{|\mathcal{L}(\mathcal{A}_{p_{\mathbf{s}^*(n)}})-L(\mathcal{A}_{p_{\mathbf{s}^*(n)}},p_{\mathbf{s}^*(n)})|}\\
&=&|\mathbb{E}_{\mathbf{t}(n)}(L(\mathcal{A}_{p_{\mathbf{s}^*(n)}},p_{\mathbf{t}(n)}))-L(\mathcal{A}_{p_{\mathbf{s}^*(n)}},p_{\mathbf{s}^*(n)})|\\\
&=&|\Pr(\mathbf{t}(n)\not\in D_n)\mathbb{E}(L(\mathcal{A}_{p_{\mathbf{s}^*(n)}},p_{\mathbf{t}(n)})|\mathbf{t}(n)\not\in
D_n)\\
&&+\Pr(\mathbf{t}(n)\in D_n)\mathbb{E}(L(\mathcal{A}_{p_{\mathbf{s}^*(n)}},p_{\mathbf{t}(n)})|\mathbf{t}(n)\in
D_n)-
L(\mathcal{A}_{p_{\mathbf{s}^*(n)}},p_{\mathbf{s}^*(n)})|\\
&\leq&\Pr(\mathbf{t}(n)\not\in D_n)|\mathbb{E}(L(\mathcal{A}_{p_{\mathbf{s}^*(n)}},p_{\mathbf{t}(n)})|\mathbf{t}(n)\not\in
D_n)-L(\mathcal{A}_{p_{\mathbf{s}^*(n)}},p_{\mathbf{s}^*(n)})|+\\
&&\Pr(\mathbf{t}(n)\in D_n)|\mathbb{E}(L(\mathcal{A}_{p_{\mathbf{s}^*(n)}},p_{\mathbf{t}(n)})|\mathbf{t}(n)\in
D_n)-L(\mathcal{A}_{p_{\mathbf{s}^*(n)}},p_{\mathbf{s}^*(n)})|\\
&\leq&\delta
B+\max_{\mathbf{\hat{s}}(n)\in\mathcal{D}_n}|L(\mathcal{A}_{p_{\mathbf{{s}^*}(n)}},p_{\mathbf{\hat{s}}(n)})-L(\mathcal{A}_{p_{\mathbf{s}^*(n)}},p_{\mathbf{s}^*(n)})|\\ &\leq&
\delta B+\epsilon.
\end{eqnarray*}
The first inequality holds because the testing samples $\mathbf{t}(n)$
consist of $n$ instances IID from $\mu$. The second equality
is obtained by conditional expectation. The next inequality uses the positiveness and the upper bound $B$ of the loss function. Finally, we apply 
Equation~\ref{eq:d_n}. 
We thus conclude that $\mathcal{A}$ generalizes for $p_{\mathbf{s}^*}$
because $\epsilon$ and $\delta$ can be chosen arbitrary.
\end{proof}

\section{Proof of Lemma~\ref{lem:div}}
\label{proof:lem1}

\begin{proof}
This proof follows the same principle as the proof of Lemma 2 from \cite{Xu2012a}.  
By contradiction, assume $\epsilon^*$ and $\delta^*$ do not exist. Let
$\epsilon_v=\delta_v=1/v$ for $v=1,2, ...$, then there exists a non
decreasing sequence $\{N(v)\}_{v=1}^\infty$ such that for all $v$, if
$n\geq N(v)$ then
$\Pr(|L(\mathcal{A}_{p_{\mathbf{s}^*(n)}},p_{\mathbf{t}(n)})-L(\mathcal{A}_{p_{\mathbf{s}^*(n)}},p_{\mathbf{s}^*(n)})|\geq
\epsilon_v)<\delta_v$. 
For each $n$ we define 
$$
D_n^v\triangleq\{\mathbf{\hat{s}}(n)|L(\mathcal{A}_{p_{\mathbf{s}^*(n)}},p_{\mathbf{\hat{s}}(n)})-L(\mathcal{A}_{p_{\mathbf{s}^*(n)}},p_{\mathbf{s}^*(n)})|<\epsilon_v\}.
$$
For each $n\geq N(v)$ we have 
$$\Pr(\mathbf{t}(n)\in
D_n^v)=1-\Pr(|L(\mathcal{A}_{p_{\mathbf{s}^*(n)}},p_{\mathbf{t}(n)})-L(\mathcal{A}_{p_{\mathbf{s}^*(n)}},p_{\mathbf{s}^*(n)})|\geq
\epsilon_v)>1-\delta_v.$$
For $n\geq N(1)$, define $D_n\triangleq D_n^{v(n)}$, where $v(n)=\max(v|N(v)\leq
n; v\leq n)$. Thus for all, $n\geq N(1)$ we have $\Pr(\mathbf{t}(n)\in
D_n)>1-\delta_{v(n)}$ and 
$$\sup_{\mathbf{\hat{s}}(n)\in D_n}|L(\mathcal{A}_{p_{\mathbf{s}^*(n)}},p_{\mathbf{\hat{s}}(n)})-L(\mathcal{A}_{p_{\mathbf{s}^*(n)}},p_{\mathbf{s}^*(n)})|<\epsilon_{v(n)}.$$ 
Note that $v(n)$ tends to infinity, it follows that
$\delta_{v(n)}\rightarrow 0$ and $\epsilon_{v(n)}\rightarrow 0$. 
Therefore, $\Pr(\mathbf{t}(n)\in D_n)\rightarrow 1$ and 
$$
\lim_{n\rightarrow \infty}\{\sup_{\mathbf{\hat{s}}(n)\in D_n}|L(\mathcal{A}_{p_{\mathbf{s}^*(n)}},p_{\mathbf{\hat{s}}(n)})-L(\mathcal{A}_{p_{\mathbf{s}^*(n)}},p_{\mathbf{s}^*(n)})|\}=0.
$$
That is $\mathcal{A}$ is weakly robust. w.r.t. $p_{\mathbf{s}}$ which is a
desired contradiction.
\end{proof}

\section{Mc Diarmid inequality}
\label{mcdiarmid}
 
Let $X_1, \ldots, X_n$ be $n$ independent random variables taking values in
 ${X}$ and let $Z=f(X_1, \ldots, X_n)$. If for each $1\leq i \leq n$, there exists a constant $c_i$ such that
\begin{eqnarray*}
\lefteqn{\sup_{x_1, \ldots, x_n, x'_i\in \mathcal{X}}|f(x_1, \ldots, x_i, \ldots, x_n) - f(x_1, \ldots,
 x'_i, \ldots, x_n)|\leq c_i, \forall 1\leq i\leq n,}\\
&&\textrm{then for any } \epsilon>0,\quad\quad\quad \mathrm{Pr}[|Z-{\mathbb E}[Z]|\geq \epsilon]\leq
2\exp\left(\frac{-2\epsilon^2}{\sum_{i=1}^n c_i^2}\right).
\end{eqnarray*}

\section{Robustness Theorem for Triplet-based Approaches}
\label{tripletthm}
We give here an adaptation of Theorem~\ref{testtheorem} for
triplet-based approaches. The proof follows the same
principle as the one of Theorem~\ref{testtheorem}. 
\begin{theorem}
Fix $\gamma>0$ and a metric $\rho$ of $\mathcal{Z}$. Suppose $\mathcal{A}$ satisfies:\\
$\forall z_1,z_2,z_3,,z'_1,z'_2,z'_3 : z_1,z_2,z_3\in \mathbf{s},
\rho(z_1,z'_1)\leq \gamma, \rho(z_2,z'_2)\leq
\gamma,\rho(z_3,z'_3)\leq \gamma$, 
$$|l(\mathcal{A}_{trip_\mathbf{s}},z_1,z_2,z_3)-l(\mathcal{A}_{tripp_\mathbf{s}},z'_1,z'_2,z'_3)|\leq \epsilon(trip_\mathbf{s})$$
and $\mathcal{N}(\gamma/2,\mathcal{Z},\rho) < \infty$. Then $\mathcal{A}$ is $(\mathcal{N}(\gamma/2,\mathcal{Z},\rho),\epsilon(trip_\mathbf{s}))$-robust.
\label{testtheoremtrip}
\end{theorem}

\section{Proof of Example~\ref{ex2} ($\ell_1$ norm)}
\label{proof:ex2}

\begin{proof}
Let $\mathbf{M^*}$ be the solution given training data $p_{\mathbf{s}}$. Due to optimality of $\mathbf{M^*}$, we have $\|\mathbf{M^*}\|_1 \leq g_0/c$.
We can partition $\mathcal{Z}$ as $|Y|\mathcal{N}(\gamma/2,{X},\|\cdot\|_1)$ sets, such that if $z$ and $z'$ belong to the same set, then $y=y'$ and $\|x-x'\|_1 \leq \gamma$. Now, for $z_1,z_2,z'_1,z'_2\in\mathcal{Z}$, if $y_1=y'_1$, $\|x_1-x'_1\|_1 \leq \gamma$, $y_2=y'_2$ and $\|x_2-x'_2\|_1 \leq \gamma$, then:
\begin{eqnarray*}
\lefteqn{|g(y_{12}[1-f(\mathbf{M^*},x_1,x_2)]) - g(y'_{12}[1-f(\mathbf{M^*},x'_1,x'_2)])|}\\
& \leq & U (|(x_1-x_2)^T\mathbf{M^*}(x_1-x'_1)| + |(x_1-x_2)^T\mathbf{M^*}(x'_2-x_2)|\\
& & + ~|(x_1-x'_1)^T\mathbf{M^*}(x'_1+x'_2)| + |(x'_2-x_2)^T\mathbf{M^*}(x'_1+x'_2)|)\\
& \leq & U(\|x_1-x_2\|_\infty\|\mathbf{M^*}\|_1\|x_1-x'_1\|_1 + \|x_1-x_2\|_\infty\|\mathbf{M^*}\|_1\|x'_2-x_2\|_1\\
& & + ~\|x_1-x'_1\|_1\|\mathbf{M^*}\|_1\|x'_1-x'_2\|_\infty + \|x'_2-x_2\|_1\|\mathbf{M^*}\|_1\|x'_1-x'_2\|_\infty)\\
& \leq & \frac{8UR\gamma g_0}{c}.
\end{eqnarray*}
\end{proof}

\section{Proof of Example~\ref{ex3} ($\ell_{2,1}$ norm and trace norm)}
\label{proof:ex3}

\begin{proof}
Let $\mathbf{M^*}$ be the solution given training data $p_{\mathbf{s}}$. Due to optimality of $\mathbf{M^*}$, we have $\|\mathbf{M^*}\|_{2,1} \leq g_0/c$. We can partition $\mathcal{Z}$ in the same way as in the proof of Example~\ref{ex1} and use the inequality $\|\mathbf{M^*}\|_{\mathcal{F}} \leq \|\mathbf{M^*}\|_{2,1}$ (from Theorem~3 of \cite{Feng2003,Klaus1995}) to derive the same bound:
\begin{eqnarray*}
\lefteqn{|g(y_{12}[1-f(\mathbf{M^*},x_1,x_2)]) - g(y'_{12}[1-f(\mathbf{M^*},x'_1,x'_2)])|}\\
& \leq & U(\|x_1-x_2\|_2\|\mathbf{M^*}\|_{\mathcal{F}}\|x_1-x'_1\|_2 + \|x_1-x_2\|_2\|\mathbf{M^*}\|_{\mathcal{F}}\|x'_2-x_2\|_2\\
& & + ~\|x_1-x'_1\|_2\|\mathbf{M^*}\|_{\mathcal{F}}\|x'_1-x'_2\|_2 + \|x'_2-x_2\|_2\|\mathbf{M^*}\|_{\mathcal{F}}\|x'_1-x'_2\|_2)\\
& \leq & U(\|x_1-x_2\|_2\|\mathbf{M^*}\|_{2,1}\|x_1-x'_1\|_2 + \|x_1-x_2\|_2\|\mathbf{M^*}\|_{2,1}\|x'_2-x_2\|_2\\
& & + ~\|x_1-x'_1\|_2\|\mathbf{M^*}\|_{2,1}\|x'_1-x'_2\|_2 + \|x'_2-x_2\|_2\|\mathbf{M^*}\|_{2,1}\|x'_1-x'_2\|_2)\\
& \leq & \frac{8UR\gamma g_0}{c}.
\end{eqnarray*}
For the trace norm,  we use the classic result $\|\mathbf{M^*}\|_{\mathcal{F}}\leq \|\mathbf{M}\|_*$, which allows us to prove the same result by replacing $\|\cdot\|_{2,1}$ by $\|\cdot\|_*$ in the proof above.
\end{proof}

\section{Proof of Example~\ref{ex4} (Kernelization)}
\label{proof:ex4}

\begin{proof}
We assume $\mathbb{H}$ to be an Hilbert space with an inner product operator $\langle\cdot,\cdot\rangle$. The mapping $\phi(\cdot)$ is continuous from $X$ to $\mathbb{H}$. The norm $\|\cdot\|_{\mathbb{H}}:\mathbb{H}\rightarrow \mathbb{R}$ is defined as $\|w\|_{\mathbb{H}}=\sqrt{\langle w,w \rangle}$ for all $w\in \mathbb{H}$, for matrices $\|\mathbf{M}\|_{\mathbb{H}}$ we take the entry wise norm by considering a matrix as a vector, corresponding to the Frobenius norm. The kernel function is defined as $k(x_1,x_2)=\langle\phi(x_1),\phi(x_2)\rangle$. 

$B_\gamma$ and $f_{\mathbb{H}}(\gamma)$ are finite by the compactness of $X$ and continuity of $k(\cdot,\cdot)$. Let $\mathbf{M^*}$ be the solution given training data $p_{\mathbf{s}}$, by the optimality of $\mathbf{M^*}$ and using the same trick as the other examples we have: $\|\mathbf{M^*}\|_{\mathbb{H}} \leq g_0/c$. 
Then, by considering a partition of $\mathcal{Z}$ into $|Y|\mathcal{N}(\gamma/2,X,\|\cdot\|_2)$ disjoint subsets such that if $(x_1,y_1)$ and $(x_2,y_2)$ belong to the same set then $y_1=y_2$ and $\|x_1-x_2\|_2\leq \gamma$. 
We have then, 
\begin{eqnarray}
\lefteqn{|g(y_{ij}[1-f(\mathbf{M^*},\phi(x_1),\phi(x_2))]) - g(y_{ij}[1-f(\mathbf{M^*},\phi(x'_1),\phi(x'_2))])|}\nonumber\\
& \leq & U( |(\phi(x_1)-\phi(x_2))^T\mathbf{M^*}(\phi(x_1)-\phi(x'_1))|
+\nonumber\\
& & |(\phi(x_1)-\phi(x_2))^T\mathbf{M^*}(\phi(x'_2)-\phi(x_2))|+\nonumber\\
& & |(\phi(x_1)-\phi(x'_1))^T\mathbf{M^*}(\phi(x'_1)+\phi(x'_2))| +\nonumber\\
& &|(\phi(x'_2)-\phi(x_2))^T\mathbf{M^*}(\phi(x'_1)+\phi(x'_2))|)\nonumber\\
&\leq&U(|\phi(x_1)^T\mathbf{M^*}(\phi(x_1)-\phi(x'_1))| +|\phi(x_2)^T\mathbf{M^*}(\phi(x_1)-\phi(x'_1))|+\nonumber\\
&&|\phi(x_1)^T\mathbf{M^*}(\phi(x'_2)\phi(x_2))| +|\phi(x_2)^T\mathbf{M^*}(\phi(x'_2)-\phi(x_2))|+\nonumber\\
&&|(\phi(x_1)-\phi(x'_1))^T\mathbf{M^*}\phi(x'_1)| +|(\phi(x_1)-\phi(x'_1))^T\mathbf{M^*}\phi(x'_2)|+\nonumber\\
&&|(\phi(x'_2)-\phi(x_2))^T\mathbf{M^*}\phi(x'_1)| +|(\phi(x'_2)-\phi(x_2))^T\mathbf{M^*}\phi(x'_2)|).\label{eq:l}
\end{eqnarray}

Then, note that 
\begin{eqnarray*}
\lefteqn{|\phi(x_1)^T\mathbf{M^*}(\phi(x_1)-\phi(x'_1))|}\\
&\leq &
\sqrt{\langle\phi(x_1),\phi(x_1)\rangle}
\|\mathbf{M}^*\|_{\mathbb{H}}\sqrt{\langle\phi(x'_1)-\phi(x'_2),\phi(x'_1)-\phi(x'_2)\rangle}\\
&\leq& B_\gamma\frac{g_o}{c}\sqrt{f_{\mathbb{H}}(\gamma)}.
\end{eqnarray*}
Thus, by applying the same principle to all the terms in the right part of  inequality \eqref{eq:l}, we obtain:
\begin{small}
$$\
|g(y_{ij}[1-f(\mathbf{M^*},\phi(x_1),\phi(x_2))]) -
g(y_{ij}[1-f(\mathbf{M^*},\phi(x'_1),\phi(x'_2))])|\leq  \frac{8U B_\gamma \sqrt{f_{\mathbb{H}}(\gamma)}  g_0}{c}.
$$
\end{small}

\end{proof}

\section{Proof of Example~\ref{ex5}}
\label{proof:ex5}

\begin{proof}
Let $\mathbf{M^*}$ be the solution given training data
$p_{\mathbf{s}}$, by the optimality of $\mathbf{M^*}$, we get
$\|\mathbf{M^*}\| \leq g_0/c$ and we consider the same partition of
$\mathcal{Z}$ as in the proof of Example~\ref{ex1}. We can then obtain easily:
\begin{eqnarray*}
\lefteqn{|g(y_{12}[1-f(\mathbf{M^*},x_1,x_2)]) - g(y'_{12}[1-f(\mathbf{M^*},x'_1,x'_2)])|}\\
& \leq & U|x'_1\mathbf{M^*}x'_2-x_1\mathbf{M^*}x_2|\\
& \leq & U|x'_1\mathbf{M^*}x'_2-x_1\mathbf{M^*}x'_2|+U|x_1\mathbf{M^*}x'_2-x_1\mathbf{M^*}x_2|\\
&\leq& U(\|x'_1-x_1\|_2\|\mathbf{M^*}\|_{\mathcal{F}}\|x'_2\|_2 +
\|x_1\|_2\|\mathbf{M^*}\|_{\mathcal{F}}\|x'_2-x_2\|_2)\leq \frac{2UR\gamma g_0}{c}.
\end{eqnarray*}
The proof is given for the Frobenius norm but can be easily adapted
to the use of $\ell_{1}$ and $\ell_{2,1}$ norms using similar arguments as
in the proofs of \ref{proof:ex2} and \ref{proof:ex3}.
\end{proof}

\section{Proof of Example~\ref{ex6}}
\label{proof:ex6}

\begin{proof}
We consider the following loss:
\begin{eqnarray*}
\lefteqn{g([1-(x_i-x_k)^T\mathbf{M}(x_i-x_k)+(x_i-x_j)^T\mathbf{M}(x_i-x_j)])}\\
&=&[1-(x_i-x_k)^T\mathbf{M}(x_i-x_k)+(x_i-x_j)^T\mathbf{M}(x_i-x_j)]_+.
\end{eqnarray*}
Let $\mathbf{M^*}$ be the solution given the training data triplets
$trip_{\mathbf{s}}$. By optimality of $\mathbf{M^*}$,
using the same derivations as above, we
get $\|\mathbf{M^*}\| \leq g_0/c$. 
Then, by considering a partition of $\mathcal{Z}$ into
$|Y|\mathcal{N}(\gamma/2,X,\|\cdot\|_2)$, three partitions $C_1$,
$C_2$, $C_3$ and
$z_1,z_2,z_3,z'_1,z'_2,z'_3\in\mathcal{Z}$ such that $z_1,z'_1\in
C_1$, 
$z_2,z'_2\in C_2$ and  $z_3,z'_3\in C_3$ with $y_1=y'_1=y_2=y'_2$,
$y_3=y'_3$,
$y_3\neq y_1$, and $\|x_1-x'_1\|_1 \leq \gamma$, $\|x_2-x'_2\|_1 \leq
\gamma$, $\|x_3-x'_3\|_1 \leq \gamma$, we have:
\begin{eqnarray*}
\lefteqn{|g([1-(x_1-x_3)^T\mathbf{M^*}(x_1-x_3)+(x_1-x_2)^T\mathbf{M^*}(x_1-x_2)])
  -}\\
\lefteqn{\hspace{.5cm}
  g([1-(x'_1-x'_3)^T\mathbf{M^*}(x'_1-x'_3)+(x'_1-x'_2)^T\mathbf{M^*}(x'_1-x'_2)])|}\nonumber\\
&\leq&
U|(x'_1-x'_3)^T\mathbf{M^*}(x'_1-x'_3)-(x_1-x_3)^T\mathbf{M^*}(x_1-x_3) +\\
&&(x_1-x_2)^T\mathbf{M^*}(x_1-x_2)-(x'_1-x'_2)^T\mathbf{M^*}(x'_1-x'_2)|\\
&\leq&U|(x'_1-x'_3)^T\mathbf{M^*}(x'_1-x'_3)-(x_1-x_3)^T\mathbf{M^*}(x_1-x_3)|+\\
&&U|(x_1-x_2)^T\mathbf{M^*}(x_1-x_2)-(x'_1-x'_2)^T\mathbf{M^*}(x'_1-x'_2)|\\
&\leq&\frac{8UR\gamma g_0}{c}+\frac{8UR\gamma g_0}{c}=\frac{16UR\gamma g_0}{c}.
\end{eqnarray*}
The first inequality is due to the $U$-lipschitz property of $g$, the
second comes from the triangle inequality and the last one follows the
same construction as in the proof of Example~\ref{ex1}. 
Then, by Theorem~\ref{testtheoremtrip}, the example holds.
\end{proof}

\bibliographystyle{unsrt}
\bibliography{refs}

\end{document}